\def\eqref#1{equation~\ref{#1}}
\def\1{\bm{1}}
\def\vzero{{\bm{0}}}
\def\vmu{{\bm{\mu}}}
\def\va{{\bm{a}}}
\def\vf{{\bm{f}}}
\def\vo{{\bm{o}}}
\def\vs{{\bm{s}}}
\def\vw{{\bm{w}}}
\def\vx{{\bm{x}}}
\def\hvs{{\hat{\bm{s}}}}
\def\vy{{\bm{y}}}
\def\vz{{\bm{z}}}
\def\vpi{{\bm{\pi}}}
\def\vsigma{{\bm{\sigma}}}
\def\vmu{{\bm{\mu}}}
\def\vtau{{\bm{\tau}}}
\DeclareMathAlphabet{\mathsfit}{\encodingdefault}{\sfdefault}{m}{sl}
\SetMathAlphabet{\mathsfit}{bold}{\encodingdefault}{\sfdefault}{bx}{n}
\def\gA{{\mathcal{A}}}
\def\gD{{\mathcal{D}}}
\def\gH{{\mathcal{H}}}
\def\gM{{\mathcal{M}}}
\def\gN{{\mathcal{N}}}
\def\gQ{{\mathcal{Q}}}
\def\gR{{\mathcal{R}}}
\def\gS{{\mathcal{S}}}
\def\gW{{\mathcal{W}}}
\def\gZ{{\mathcal{Z}}}
\def\sI{{\mathbb{I}}}
\def\sR{{\mathbb{R}}}
\newcommand*{\abs}[1]{| #1 |}
\NewDocumentCommand{\norm}{sm}{\IfBooleanTF{#1}{\|#2\|}{\left\| #2 \right\|}}
\DeclareMathOperator*{\defeq}{\smash{\overset{\mathrm{def}}{=}}}
\newcommand{\E}{\mathbb{E}}
\newcommand{\R}{\mathbb{R}}
\DeclareMathOperator*{\argmax}{arg\,max}
\newcommand{\algo}{\textsc{ActSafe}}
\newcommand{\sgym}{\textsc{Safety-Gym}}
\newcommand{\rwrl}{\textsc{RWRL}}
\definecolor{dark-blue}{rgb}{0,0,0.42}
\theoremstyle{plain}
\newtheorem{theorem}{Theorem}[section]
\newtheorem{lemma}[theorem]{Lemma}
\newtheorem{corollary}[theorem]{Corollary}
\theoremstyle{definition}
\newtheorem{definition}[theorem]{Definition}
\newtheorem{assumption}[theorem]{Assumption}
\theoremstyle{remark}
\Crefname{assumption}{Assumption}{Assumptions} % For capitalized references
\crefname{assumption}{assumption}{assumptions} % For lower-case references
\algrenewcommand{\algorithmiccomment}[1]{\hfill\ding{228} #1}
\title{\algo{}: Active Exploration with Safety Constraints for Reinforcement Learning}
\author{Yarden As, Bhavya Sukhija \thanks{Equal contribution. Correspondence to: \href{mailto:yardas@ethz.ch}{\textcolor{black}{\texttt{yardas@ethz.ch}}}} \\
ETH Z{\"u}rich
\And
Lenart Treven \\
ETH Z{\"u}rich
\And
Carmelo Sferrazza \\
UC Berkeley
\AND
Stelian Coros  \\
ETH Z{\"u}rich
\And
Andreas Krause  \\
ETH Z{\"u}rich
}
\begin{document}

\maketitle

\begin{abstract}
\looseness=-1
Reinforcement learning (RL) is ubiquitous in the development of modern AI systems. However, state-of-the-art RL agents require extensive, and potentially unsafe, interactions with their environments to learn effectively. 
These limitations confine RL agents to simulated environments, hindering their ability to learn directly in real-world settings. In this work, we present \algo{}, a novel model-based RL algorithm for safe and efficient exploration. 
\algo{} learns a well-calibrated probabilistic model of the system and plans optimistically w.r.t.~the epistemic uncertainty about the unknown dynamics, while enforcing pessimism w.r.t.~the safety constraints. Under regularity assumptions on the constraints and dynamics, we show that \algo{} guarantees safety during learning while also obtaining a near-optimal policy in finite time. In addition, we propose a practical variant of \algo{} that builds on latest model-based RL advancements and enables safe exploration even in high-dimensional settings such as visual control. We empirically show that \algo{} obtains state-of-the-art performance in difficult exploration tasks on standard safe deep RL benchmarks while ensuring safety during learning.
\end{abstract}

\section{Introduction}
Reinforcement learning (RL) is a powerful paradigm for sequential decision-making under uncertainty, with many applications in games \citep{mnih2013playingatarideepreinforcement,Silver2016}, recommender systems \citep{maystre2024optimizingaudiorecommendationslongterm}, nuclear fusion control \citep{Degrave2022}, data-center cooling \citep{lazic2018data} and robotics \citep{Lee_2020,brohan2023rt1roboticstransformerrealworld,cheng2024express}. Despite the notable progress, its application without any use of simulators remains largely limited. This is primarily because, in many cases, RL methods require
massive amounts
of data for learning while also being inherently unsafe during exploration.

In many real-world settings, environments are complex and rarely align exactly with the assumptions made in simulators.
Learning directly in the real world allows RL systems to close the sim-to-real gap and continuously adapt to evolving environments and distribution shifts. However, to unlock these advantages, RL algorithms must be sample-efficient and ensure safety throughout the learning process to avoid costly failures or risks in high-stakes applications. For instance, agents learning driving policies in autonomous vehicles must prevent collisions with other cars or pedestrians, even when adapting to new driving environments. This challenge is known as \emph{safe exploration}, where the agent’s exploration is restricted by safety-critical, often unknown, \emph{constraints that must be satisfied throughout the learning process}.

\looseness-1Several works study safe exploration
and have demonstrated state-of-the-art performance in terms of both safety and sample efficiency for learning in the real world~\citep{pmlr-v37-sui15,racecarSafeopt,berkenkamp2021bayesian,CooperSafeOpt,sukhija2022scalable,widmer2023tuning}. These methods maintain a ``safe set'' of policies during learning, selecting policies from this set to safely explore and gradually expand it. Under common regularity assumptions about the constraints, these approaches guarantee safety throughout learning. However, explicitly maintaining and expanding a safe set, limits these methods to low-dimensional policies, such as PID controllers. This makes them difficult to scale to more complex tasks such as those considered in deep RL.

The goal of this work is to address this gap. To this end, we propose a scalable model-based RL algorithm -- \algo{} -- for efficient and safe exploration. 
Crucially, \algo{} learns an uncertainty-aware dynamics model, which it uses to implicitly define and expand the safe set of policies.
We theoretically show that \algo{} ensures safety throughout learning and converges to a near-optimal policy within a finite number of episodes. Moreover, \algo{} is practical and integrates seamlessly with state-of-the-art dynamics modeling techniques, for instance Dreamer~\citep{hafner2023mastering}, delivering strong empirical performance.
Thus, \algo{} advances the frontier of safe RL methods, both in theory and practice. Our main contributions are summarized below.
\paragraph{Contributions}
\begin{itemize}[leftmargin=0.5cm]
    \item We propose \algo{}, a novel model-based RL algorithm for safe exploration in continuous state-action spaces.
    \algo{} maintains a \emph{pessimistic} set of safe policies and \emph{optimistically} selects policies within this set that yield trajectories with the largest model epistemic uncertainty. 
    \item We show that when the dynamics lie in a reproducing kernel Hilbert space (RKHS), \algo{} guarantees safe exploration. In addition, we provide a sample-complexity bound for \algo{}, illustrating that \algo{} obtains $\epsilon$-optimal policies in a finite number of episodes. To the best of our knowledge, we are the first to show \emph{safety and finite sample complexity} for safe exploration in model-based RL with continuous state-action spaces.
    \item In our experiments, we demonstrate that \algo{}, when combined with a Gaussian process dynamics model, achieves efficient and safe exploration. Additionally, we show that \algo{} scales to high-dimensional environments of the \sgym{} and \rwrl{} benchmarks, excelling in challenging exploration tasks with visual control while also incurring significantly fewer constraint violations.
\end{itemize}

\section{Related Works}
\paragraph{Constrained Markov decision processes (CMDP) for safe RL}
\looseness=-1 Safety in reinforcement learning can be modeled in various ways \citep{JMLR:v16:garcia15a,brunke2022safe}. Constrained Markov decision processes (CMDPs) serve as a natural option for this purpose, as they can encode unsafe behaviors through constraints and enjoy many classical results from planning in MDPs~\citep{altman-constrainedMDP}.
Learning and planning in CMDPs have been extensively explored in the RL community, both theoretically and in practice. Notably, the works of \citet{efroni2020explorationexploitation,NEURIPS2022_14a5ebc9,ding2024convergencesamplecomplexitynatural,müller2024trulynoregretlearningconstrained} derive sample complexity bounds for CMDPs in discrete state-action spaces, whereas \citet{achiam2017constrained,tessler2018rewardconstrainedpolicyoptimization,pmlr-v119-stooke20a,xu2021crponewapproachsafe,liu2022constrained,as2022constrained,sootla2022saute,huang2024safedreamer} develop deep RL algorithms for CMDPs in continuous state-action spaces. However, all the aforementioned works relax the requirement of safe exploration and thus do not ensure the safety during learning. This is in contrast to this work, where we tackle the hard problem of safe exploration.

\paragraph{Provably safe exploration}
%The works of 
\citet{turchetta2016safe,Wachi_Sui_Yue_Ono_2018,wachi2020safereinforcementlearning} focus on safe exploration in CMDPs with \emph{discrete} state-action spaces and a constraint function that lies in an RKHS. \citet{zheng2020constrained} study sample complexity for safe exploration in discrete CMDPs. For continuous state-action spaces, \citet{berkenkamp2021bayesian} and extensions thereof \citep{baumann2021gosafe, sukhija2022scalable, hübotter2024informationbased}, leverage ideas from safe Bayesian optimization \citep{pmlr-v37-sui15} to directly optimize over the policy parameters in a model-free manner. The proposed algorithms guarantee safe exploration and finite sample complexity for learning an $\epsilon$-optimal solution. When evaluated on real-world systems, 
these methods exhibit remarkable sample efficiency while also being safe during learning \citep{ CooperSafeOpt,kirschner19a,widmer2023tuning}. However, these approaches are limited to simple low-dimensional policies, e.g., PID controllers, and are hard to scale to policies with more than few parameters.
In a similar spirit, \citet{berkenkamp2017safe} propose a model-based RL algorithm for safe learning, where safety is modeled in terms of Lyapunov stability. Even though the method enjoys similar theoretical guarantees as \citet{berkenkamp2021bayesian}, it assumes access to a generative simulator and thus cannot be applied to traditional online RL settings. In contrast,~\citet{koller2018learning,curi2022safe} propose more practical safe learning methods in combination with model-predictive control (MPC). While these methods guarantee safety during learning, they lack optimality guarantees and are computationally expensive to run in real-time.

A common aspect among most of the aforementioned methods is their use of an intrinsic objective, such as the model epistemic uncertainty, to guide and restrain exploration. Crucially, these methods maintain a safe set of policies which they gradually expand during learning by sampling policies that yield the highest intrinsic reward. In this work, we build on this key insight to propose a model-based RL algorithm for online learning that enjoys the same kind of guarantees while also being applicable in real-world settings such as deep RL.

\paragraph{Safe exploration with deep RL} 
\looseness=-1 A common approach to safe exploration is the use of safety filters~\citep{dalal2018safe,wabersich2021predictive,curi2022safe}, which modify the actions produced by an unsafe policy to meet safety constraints before they are executed on the real system. A key advantage of safety filters is that they can be easily added to any ``off-the-shelf'' unsafe RL algorithm. However, while safety is ensured, safety filters can lead to arbitrarily bad exploration and therefore lack guarantees for optimality. The works of \citet{srinivasan2020learning,thananjeyan2021recovery} rely on learning safety critics that certificate state-actions as safe, either for policy optimization or during online data collection. These works provide strong empirical results, including demonstrations of safe policies on real hardware. In addition, following this approach, \citet{bharadhwaj2020conservative} upper bound the the probability of making infeasible policy updates. Another line of work, relies on guaranteeing feasibility of policy optimization algorithms. Notably, \citet{chow2019lyapunov} use Lyapunov functions to guarantee feasibility of policy gradients iterates and derive their analysis on discrete state-action spaces. \citet{usmanova2024log} propose Log-Barriers SGD (LBSGD), an optimization algorithm that ensures feasibility of all its iterates with barrier functions, showcasing its application in navigation tasks with image observations. More recently, \citet{as2024safeexplorationusingbayesian, ni2024safeexplorationapproachconstrained} use LBSGD for safe learning with \emph{greedy} policy gradients, i.e., without considering intrinsic rewards to expand the safe set of policies. Crucially, this form of greedy policy search may result in sub-optimal policies, as described in \Cref{sec: alg} and empirically shown in \Cref{sec:experiments}.

\section{Problem Setting}
\label{sec:problem-setting}
We consider a discrete-time, episodic, constrained Markov decision process (CMDP), where the goal is to find a policy that not only maximizes the reward but also keeps the accumulated costs below a specified threshold, i.e., satisfies a safety constraint. This type of formulation is common in real-world scenarios, such as robot navigation. In this setting, the reward could represent the negative distance to a target destination, while the costs could represent penalties, such as a cost of~1 incurred for each collision with an obstacle. The CMDP formulation allows us to separate these two objectives, thus ensuring constraint satisfaction and safety, for an optimal policy.
In this setup, we consider dynamical systems with additive noise and bounded running rewards $r$ and costs $c$
\begin{align}
\begin{aligned}
        \vs_{t+1} = \vf^*(\vs_t, \va_t) + \vw_t \label{eq:dynamics}, \ &(\vs_t, \va_t) \in \gS \times \gA, \ \vs(0) = \vs_0 \\
    &r(\vs, \va) \in [0, R_{\max}] \qquad \text{(Running reward)}  \\
    &c(\vs, \va) \in [0, C_{\max}] \qquad \text{(Running cost)}.
\end{aligned}
\end{align}
Here $\vs_t \in \gS \subset\R^{d_\vs}$ is the state, $\va_t \in \gA \subset \R^{d_\va}$ the control input, and $\vw_t \in \gW \subseteq \R^{d_\vs}$ the process noise. The dynamics $\vf^*$ are unknown and without loss of generality, the reward $r$ and cost $c$ are assumed to be known.

\paragraph{Task} In this work, we study the following constrained RL problem \citep{altman-constrainedMDP}
\begin{equation}
\begin{split}
   \max_{\vpi \in \Pi} J_r(\vpi, \vf^*) & \coloneq \max_{\vpi \in \Pi} \E_{\vs_0, \vpi} \left[ \sum^{T-1}_{t=0} r(\vs_t, \va_t) \right] \ \text{s.t. }  J_c(\vpi, \vf^*) \coloneq \E_{\vs_0, \vpi} \left[ \sum^{T-1}_{t=0} c(\vs_t, \va_t) \right] \leq d; \\ 
&\vs_{t+1} = \vf^*(\vs_t, \vpi_t(\vs_t)) + \vw_t. 
\end{split}
\label{eq:average cost formulation}
\end{equation}
\looseness=-1

We study the episodic setting, with episodes $n=1, \ldots, N$. At the beginning of the episode $n$, we deploy a policy $\vpi_n = (\vpi_{n, 0}, \vpi_{n, 1}, \dots, \vpi_{n, T - 1})$, chosen from the policy space $\Pi$ for a horizon of $T$ on the system. Next, we obtain the trajectory $\vtau_n = (\vs_{n,0}, \ldots, \vs_{n, T})$, which we add to a dataset of transitions  $\gD_n = \{(\vz_{n, i } = (\vs_{n, i}, \vpi_{n, i}(\vs_{n, i})), \vy_{n, i} = \vs_{n, i + 1})_{0 \le i < T}\}$ and use the collected data to learn a  model of $\vf^*$.

\section{\algo{}: Active Exploration with Safety Constraints} \label{sec: alg}
\looseness=-1 A key challenge in learning with safety constraints is ensuring that these constraints are not violated during exploration. In the following, we introduce an idealized version of \algo{}, which guarantees safe exploration for dynamical systems with Gaussian process dynamics~\footnote{These guarantees can be extended to more general well-calibrated models as in~\cite{curi2020efficient}}.
Moreover, we also provide a bound on the sample complexity of \algo{} for learning an $\epsilon$-optimal policy.
To the best of our knowledge, this is the first model-based safe RL algorithm for continuous state-action spaces that provides guarantees for both safety and sample complexity. In Section~\ref{sec:practical-modifications}, we discuss a practical variant scaling to more complex domains. Our choice of a model-based approach is motivated by its superior empirical sample efficiency \citep{chua2018deep,as2022constrained} as well as our theoretical analysis.

\subsection{Assumptions}
Theoretically studying safe exploration without any assumptions on the underlying dynamical system is an ill-posed problem. In the following, we make some assumptions on the underlying problem that are common in the model-based RL \citep{curi2020efficient, kakade2020information} and safe RL \citep{berkenkamp2021bayesian, baumann2021gosafe} literature.

\begin{assumption}[Continuity of $\vf^*$ and $\vpi$]
\label{ass:lipschitz-continuity}
The dynamics model $\vf^*$ is $L_{\vf}$--Lipschitz, the cost $c$ is $L_c$--Lipschitz, and all $\vpi \in \Pi$ are continuous. 
\end{assumption}

\begin{assumption}[Process noise distribution]
\looseness=-1
The process noise is i.i.d. Gaussian with variance $\sigma^2$, i.e., $\vw_t \stackrel{\mathclap{i.i.d}}{\sim} \gN(\vzero, \sigma^2\sI)$.
\label{ass:noise-properties}
\end{assumption}

We focus on the setting where $\vw$ is homoscedastic for simplicity. However, our framework can also be applied to the more general heteroscedastic and sub-Gaussian case \citep{sukhija2024optimistic, hübotter2024informationbased}.
\begin{assumption}[Initial safe seed]
    We have access to an initial nonempty safe set $\gS_0$ of policies, i.e., 
    % $\gS_0 = \{\vpi \in \Pi ;  J_c(\vpi) \leq d\} \neq \emptyset$
    $\forall \vpi \in \gS_0: J_c(\vpi) \leq d$ and $ \gS_0 \neq \emptyset$. 
    \label{ass:safe-seed}
\end{assumption}

This assumption is crucial since without any prior knowledge about the system, ensuring safety is unrealistic. Therefore, $\gS_0$ allows us to start the learning process by selecting policies from this set.
In practice, this safe set could be obtained from a simulator or offline demonstration data.

In the following, we assume that at each step $n$ we learn a mean estimate $\vmu_n$ of $\vf^*$ and can quantify our uncertainty $\vsigma_n$ over the estimate. This allows us to learn an uncertainty-aware model of $\vf^*$, which is crucial for exploration and safety.
More formally, we learn a well-calibrated statistical model of $\vf^*$ as defined below.
\begin{definition}[Well-calibrated statistical model of $\vf^*$, \cite{rothfuss2023hallucinated}]
\label{definition:well-calibrated-model}
\looseness=-1
    Let $\gZ \defeq \gS \times \gA$.
    An all-time well-calibrated statistical model of the function $\vf^*$ is a sequence $\{\gQ_{n}(\delta)\}_{n \ge 0}$, where
    \begin{align*}
        \gQ_n(\delta) \defeq \left\{\vf: \gZ \to \sR^{d_s} \mid \forall \vz \in \gZ, \forall j \in \{1, \ldots, d_s\}: \abs{\mu_{n, j}(\vz) - f_j(\vz)} \leq \beta_n(\delta) \sigma_{n, j}(\vz)\right\},
    \end{align*}
    if, with probability at least $1-\delta$, we have $\vf^* \in \bigcap_{n \ge 0}\gQ_n(\delta)$.
    Here, $f_{j}$, $\mu_{n, j}$ and $\sigma_{n, j}$ denote the $j$-th element in the vector-valued functions $\vf$, $\vmu_n$ and $\vsigma_n$ respectively, and $\beta_n(\delta) \in \sR_{\geq 0}$ is sequence of scalar functions that depends on the confidence level $\delta \in (0, 1]$ and is monotonically increasing in $n$. 
\end{definition}
Next, we assume that $\vf^*$ resides in a Reproducing Kernel Hilbert Space (RKHS) of vector-valued functions and show that this is sufficient for us to obtain a well-calibrated model.
\begin{assumption}
\label{ass:rkhs-func}
We assume that the functions $f^*_j$, $j=\{1, \dots, d_s\}$ lie in a RKHS with kernel $k$ and have a bounded norm $B$, that is $\vf^* \in \gH^{d_s}_{k, B}$, with $\gH^{d_s}_{k, B} = \{\vf \mid \norm{f_j}_k \leq B, j=\{1, \dots, d_s\}\}$. Moreover, we assume that $k(\vz, \vz) \leq \sigma_{\max}$ for all $\vz \in \gZ$.
\end{assumption}

\Cref{ass:rkhs-func} allows us to model $\vf^*$ with GPs for which the mean and epistemic uncertainty (${\bm \mu}_n(\vz) = [\mu_{n,j} (\vz)]_{j\leq d_s}$, and $\vsigma_n(\vz) = [\sigma_{n,j} (\vz)]_{j\leq d_s}$) have an analytical formula (c.f., \cref{eq:GPposteriors} in \cref{sec:proofs}).

\begin{lemma}[Well calibrated confidence intervals for RKHS, \citet{rothfuss2023hallucinated}]
    Let $\vf^* \in \gH_{k,B}^{d_s}$.
Suppose ${\vmu}_n$ and $\vsigma_n$ are the posterior mean and variance of a GP with kernel $k$ after episode $n$.
There exists $\beta_n(\delta)$, for which the sequence $(\vmu_n, \vsigma_n, \beta_n(\delta))_{n \ge 0}$ represents a well-calibrated statistical model of $\vf^*$.
\label{lem:rkhs-confidence-interval}
\end{lemma}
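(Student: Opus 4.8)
The plan is to reduce the vector-valued calibration statement to $d_s$ scalar kernel regression problems, invoke the standard self-normalized concentration bound for kernel ridge regression on each coordinate, and recombine via a union bound over the output dimensions. The scalar building block is the now-classical anytime confidence bound for functions of bounded RKHS norm (the self-normalized martingale inequality of Abbasi-Yadkori, P\'al and Szepesv\'ari together with its kernelized refinement by Chowdhury and Gopalan), which is exactly what \Cref{ass:rkhs-func} and \Cref{ass:noise-properties} are tailored to.

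First I would fix a coordinate $j \in \{1, \dots, d_s\}$ and isolate the scalar problem of recovering $f^*_j$ from the observations $(\vz_{n,i}, y_{n,i,j})$, where $y_{n,i,j}$ is the $j$-th entry of $\vy_{n,i} = \vf^*(\vz_{n,i}) + \vw_{n,i}$. By \Cref{ass:noise-properties} the scalar noises $w_{n,i,j}$ are i.i.d.\ $\gN(0,\sigma^2)$, hence conditionally $\sigma$-sub-Gaussian with respect to the natural filtration generated by all past states, actions and noise realizations. Since $\mu_{n,j}$ and $\sigma_{n,j}$ are the posterior mean and standard deviation of a GP with kernel $k$, this is precisely the setting of kernelized least squares with adaptively chosen inputs.

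Second, I would apply the self-normalized bound: because $\norm{f^*_j}_k \leq B$ by \Cref{ass:rkhs-func}, it yields, for the fixed coordinate $j$ and with probability at least $1-\delta'$, the \emph{anytime} inequality $\abs{\mu_{n,j}(\vz) - f^*_j(\vz)} \leq \beta^{\mathrm{sc}}_n(\delta')\,\sigma_{n,j}(\vz)$ holding simultaneously for all $n \geq 0$ and all $\vz \in \gZ$, with
\[
\beta^{\mathrm{sc}}_n(\delta') = B + \sigma\sqrt{2\bigl(\gamma_n + 1 + \ln(1/\delta')\bigr)},
\]
where $\gamma_n$ denotes the maximal information gain of the kernel $k$ after the observations gathered up to episode $n$; the bound $k(\vz,\vz) \leq \sigma_{\max}$ from \Cref{ass:rkhs-func} ensures $\gamma_n$ is finite. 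Third, I would union bound over the $d_s$ coordinates by taking $\delta' = \delta/d_s$ and set
\[
\beta_n(\delta) = B + \sigma\sqrt{2\bigl(\gamma_n + 1 + \ln(d_s/\delta)\bigr)}.
\]
With probability at least $1-\delta$ the calibration inequality of \Cref{definition:well-calibrated-model} then holds for every coordinate, every $\vz$, and every $n$ at once, which is exactly the assertion $\vf^* \in \bigcap_{n \ge 0}\gQ_n(\delta)$. Monotonicity of $\beta_n(\delta)$ in $n$ follows because $\gamma_n$ is nondecreasing.

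The main obstacle, and the only nonroutine part, is justifying the anytime, adaptively-valid scalar bound in the second step. Two features must be checked: the guarantee has to hold uniformly over all episodes $n$ simultaneously rather than for a fixed horizon, and the query points $\vz_{n,i}$ are not fixed in advance but generated by the data-dependent policy $\vpi_n$ and the intra-episode noise. Both are delivered by the self-normalized martingale inequality, which is uniform in time by construction and requires only a conditionally sub-Gaussian martingale-difference noise sequence. The care needed is in specifying the filtration so that it records the intra-episode noise $\vw_{n,0}, \dots, \vw_{n,i-1}$, making each $\vz_{n,i}$ measurable before $\vw_{n,i}$ is revealed; once this predictability is verified, the scalar result applies verbatim and the remaining steps are bookkeeping.
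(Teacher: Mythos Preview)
The paper does not prove this lemma; it imports it as a black box from \citet{rothfuss2023hallucinated} and uses it only via \Cref{lem:rkhs-confidence-interval} in the downstream analysis. There is therefore no in-paper proof to compare against.

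That said, your proposal is the standard and correct way to establish the result, and it matches what the cited source (and the underlying Chowdhury--Gopalan / Abbasi-Yadkori line of work) actually does: reduce to $d_s$ scalar kernel regression problems, invoke the anytime self-normalized concentration bound to obtain $\beta^{\mathrm{sc}}_n(\delta') = B + \sigma\sqrt{2(\gamma_n + 1 + \ln(1/\delta'))}$ for each coordinate, and union-bound over the $d_s$ outputs. Your attention to the filtration---ensuring that each query $\vz_{n,i}$ is measurable before the corresponding noise $\vw_{n,i}$ is revealed---is exactly the right point to verify, and it goes through here because the policy $\vpi_n$ depends only on data from episodes $1,\dots,n-1$ and the within-episode state $\vs_{n,i}$ is determined by noise up to index $i-1$. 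The monotonicity argument via $\gamma_n$ is also correct. Nothing is missing.
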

\looseness=-1
In summary, \Cref{ass:rkhs-func} and \Cref{lem:rkhs-confidence-interval} show that in the RKHS setting, a GP is a well-calibrated model.
For more general models like Bayesian neural networks (BNNs), methods such as \cite{kuleshov2018accurate} can be used for calibration. Overall, our results can also be extended beyond the RKHS setting to other classes of well-calibrated models similar to~\cite{curi2020efficient}.

\subsection{\algo: Algorithmic Framework}
A crucial element of safe exploration algorithms is the exploration--expansion dilemma~\citep{hübotter2024informationbased}. In the following, we explain this in further detail, we then present a sketch of \algo{} and finally the formal algorithm. 

\paragraph{Safe set expansion}To ensure the safety of the agent during the initial phases of learning, \algo{} begins exploration by selecting policies from $\gS_0$ (\Cref{ass:safe-seed}). This reduces uncertainty $\vsigma_n$ about $\vf^*$ for policies in $\gS_0$, allowing us to infer the safety of policies beyond $\gS_0$ and \emph{expand} the safe set (see \Cref{fig:expansion}).
Safe set expansion is critical in safe RL because the optimal policy may lie outside the initial safe set, and expanding the safe set is necessary to reach it. Unlike traditional RL, where exploration focuses on maximizing reward, safe RL methods must also explore to expand the safe set. Methods like optimism and Thompson sampling, which focus on reward maximization, do not address this need for safe set expansion \citep{pmlr-v37-sui15}.

\paragraph{Algorithm Sketch}
\algo{} operates in two stages; \begin{enumerate*}[label=\textbf{(\roman*)}]
    \item expansion by intrinsic exploration and
    \item exploitation of extrinsic reward.
\end{enumerate*} In the first stage, \algo{} uses the model epistemic uncertainty as an intrinsic reward $r^{\text{\text{explore}}}(\vs, \va) = \norm{\vsigma_{n-1}(\vs, \va)}$ and selects policies within the safe set that yield trajectories with high uncertainties. This enables \algo{} to efficiently reduce its uncertainty within the safe set and expand it. \algo{} performs the intrinsic exploration phase for a fixed number of episodes $n^*$ till the safe set is sufficiently large and then transitions to the second stage. In the exploitation stage, \algo{} greedily maximizes the extrinsic reward $r$, effectively aiming to solve the problem in \cref{eq:average cost formulation}. 

Most model-based safe RL methods~\citep{as2022constrained} focus only on the second stage and ignore safe set expansion. In contrast, the theoretically grounded approaches of~\citet{berkenkamp2021bayesian, baumann2021gosafe, sukhija2022scalable, hübotter2024informationbased} explicitly account for the expansion, but are not scalable to high dimensional policies typically considered in RL.

\begin{algorithm}[t]
    \caption{\textbf{\algo:}  \textsc{Active Exploration with Safety Constraints} (Expansion stage)}
        \label{alg:safe-exploration}
    \begin{algorithmic}[]
        \State {\textbf{Init:}}{ Aleatoric uncertainty $\sigma$, Probability $\delta$, Statistical model $(\vmu_0, \vsigma_0, \beta_0(\delta))$}
        \For{episode $n=1, \ldots, n^*$}             \State $\vpi_n = \argmax_{\vpi \in \gS_n} \max_{\vf \in \gM_n} \E_{\vtau^{\vpi, \vf}}\left[\sum_{t=0}^{T-1} \norm{\vsigma_{n-1}(\hvs_t, \vpi(\hvs_t))}\right]$ \algorithmiccomment{Prepare policy}
            \State $\gD_n \leftarrow \textsc{Rollout}(\vpi_n)$ \algorithmiccomment{Collect data}
            \State $\text{Update } (\gM_n, \gS_n) \leftarrow \gD_{1:n}$ \algorithmiccomment{Update statistical model and safe set}
        \EndFor
    \end{algorithmic}
\end{algorithm}

Next, we present our main algorithm. 
To ensure safety during learning, we maintain a conservative (pessimistic) estimate of the safe set which is defined below.
\begin{definition}
Let $\gM_n \defeq \gM_{n-1} \cap \gQ_{n}, \forall n \geq 1$  denote the set of plausible models, and  $P_{n}(\vpi) = \max_{\vf \in \gM_n} J_c(\vpi, \vf)$ our \emph{pessimistic} estimate of the expected costs w.r.t.~$\gM_n$. Then, we define the safe set $\gS_n$ as
\begin{equation}
\gS_n \defeq \gS_{n-1} \cup \left\{\vpi \in \Pi\setminus \gS_{n-1};  
\exists \vpi' \in \gS_{n-1} \ \text{s.t.} \ P_n(\vpi') + D(\vpi, \vpi') \leq d\right\}, \label{eq:safeset-eq}
\end{equation}
where
\begin{align*}
&D(\vpi, \vpi') =  \\
&\E_{\vtau^{\vpi'}}\left[\sum^{T-1}_{t=0} 
        \min\left\{L_c\norm{\vpi'(\vs_t) - \vpi(\vs_t)}, 2C_{\max}\right\} + TC_{\max} \min\left\{\frac{L_f\norm{\vpi'(\vs_t) - \vpi(\vs_t)}}{\sigma}, 1\right\}\right]
\end{align*}
\label{def:safeset}
\end{definition}

\paragraph{Interpretation of \cref{def:safeset}} 

We maintain a pessimistic estimate, $P_n$ of the constraint value function $J_c$ w.r.t.~our model set $\gM_n$. In \cref{eq:safeset-eq} we define the expansion operator for the safe set.
This operator adds new policies $\vpi$ that are not yet in the safe set, i.e., those in $\Pi\setminus \gS_{n-1}$, to $\gS_n$ if they are close to some policy $\vpi'$ from within the safe set.
The distance $D(\vpi, \vpi')$ measures how close the two policies are in terms of the underlying cost function, and it is similar to other distance metrics, such as the one in \citet[Theorem 2.1]{foster2024behavior}.

\begin{figure}
\vspace{5cm}
    \centering
\begin{tikzpicture}
    \begin{scope}[transform canvas={xshift=-7cm}]
        \includegraphics[width=\textwidth]{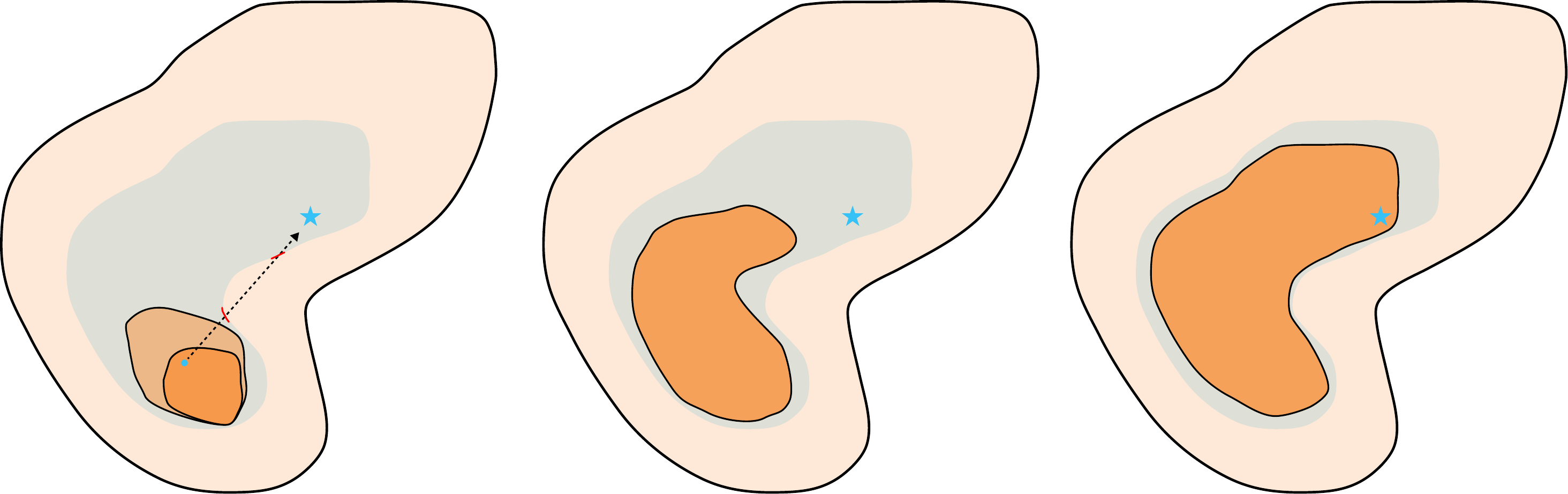}
        \node at (-0.5, 4.05) {$\Pi$};
        \node at (-5.25, 4.05) {$\Pi$};
        \node at (-10, 4.05) {$\Pi$};
        \node at (-10.5, 3.5) {$\gR^{\epsilon}_{H}(\gS_0)$};
        \node at (-5.5, 3.5) {$\gR^{\epsilon}_{H}(\gS_0)$};
        \node at (-0.75, 3.5) {$\gR^{\epsilon}_{H}(\gS_0)$};
        \node at (-12.1, 0.95) {$\gS_{0}$};
        \node at (-12.6, 1.43) {$\gS_{1}$};
        \node at (-11.4, 2.6) {$\vpi^*$};
        \node at (-6.6, 2.6) {$\vpi^*$};
        \node at (-1.9, 2.6) {$\vpi^*$};
    \end{scope}
\end{tikzpicture}
      \caption{
  \looseness=-1
  Schematic illustration of the expansion process. We expand the safe set at iteration $n - 1$ by reducing our uncertainty around policies at the boundary of $\gS_{n-1}$. The pale blue area depicts the reachable set $\gR_H^{\varepsilon}(\gS_0)$ after $H$ expansion iterations. The arrow on the leftmost illustration demonstrates that without explicit expansion, finding the optimal policy $\pi^*$ is intractable.}
  \vspace{-0.25cm}
  \label{fig:expansion}
\end{figure}

The expansion operator is common in the safe BO and RL literature \citep{racecarSafeopt,siemensSafeOpt,berkenkamp2021bayesian, baumann2021gosafe, 
CooperSafeOpt,sukhija2022scalable,
eventtriggeredbo,
fiedler2024safetysafebayesianoptimization}, and while it is generally difficult to evaluate in continuous spaces, it gives a key insight for safe RL methods: \emph{to effectively expand our knowledge of what is safe, we have to reduce our pessimism across policies in our safe set}.

\looseness-1Accordingly, during the expansion phase, we use the following objective for \algo{}, which, in the $n$-th episode, selects the policy $\vpi_n$ that yields the high uncertainty about the underlying dynamics 
\begin{align}
   \vpi_n,  \vf_n &= \argmax_{\vpi \in \gS_n, \vf \in \gM_n} \underbrace{\E_{\vtau^{\vpi, \vf}}\left[\sum_{t=0}^{T-1} \norm{\vsigma_{n-1}(\hvs_t, \vpi(\hvs_t))}\right]}_{\defeq J_{r_n}(\vpi, \vf)}.      
   \label{eq:exploration-op-optimistic}
\end{align}
Furthermore, akin to \citet{curi2020efficient, sukhija2024optimistic}, we introduce additional exploration, by also optimistically picking the dynamics $\vf_n$ from our set of plausible models $\gM_n$. Moreover, since the true dynamics, $\vf^*$ are unknown, we have to plan w.r.t.~some dynamics model in $\gM_n$. A theoretically grounded and well-established strategy for model-based RL methods is to pick an optimistic model $\vf_n$ from $\gM_n$. As we show in \cref{thm:main-theorem} this results in first-of-its-kind sample complexity and safety guarantees.
The expansion phase of the algorithm is summarized in \Cref{alg:safe-exploration}.

\begin{theorem}
Let \Cref{ass:lipschitz-continuity,ass:noise-properties,ass:rkhs-func,ass:rkhs-func,ass:safe-seed} hold. Then, we have with probability at least $1-\delta$ that $J_c(\vpi_n, \vf^*) \leq d$ $\forall n \geq 0$, i.e., \emph{\algo{} is safe during all episodes}. 

Moreover, consider any $\epsilon > 0$ and define $\gR^{\epsilon}_H(\gS_0)$ as the reachable safe set after $H$ expansions 
\begin{align*}
    \gR^{\epsilon}_H(\gS_0) &\defeq \gR^{\epsilon}_{H-1}(\gS_0) \cup \left\{\vpi\in \Pi\setminus \gR^{\epsilon}_{H-1}(\gS_0);  
\exists \vpi' \in \gR^{\epsilon}_{H-1}(\gS_0) \ \mathrm{s.t.} \ J_c(\vpi') + D(\vpi, \vpi') \leq d - \epsilon\right\} \\
\gR^{\epsilon}_0(\gS_0) &\defeq \gS_0.
\end{align*}
% There exists a constant $\Tilde{C}$ such that for all $n, N \geq 0$ 
% \begin{equation*}
%     |J_c(\vpi, \vf) - J_c(\vpi, \vf^*)| 
%          \leq \Tilde{C} T^{\frac{5}{2}} \title{C} \beta^2_{n + N-1} \sqrt{\frac{\gamma_{n+N-1}(k)}{N}}.
% \end{equation*}
Let $n^*$ be the smallest integer such that
\begin{equation}
            \frac{n^*}{\gamma_{n^*}(k)                \beta^4_{n^*}(\delta)} \geq \frac{(H + 1) T^{6} C^4\frac{d_s \sigma^2_0}{\log(1 + \sigma^{-2}\sigma^2_0)}}{\epsilon^2}.
            \label{eq: bound on sample complexity},
        \end{equation}
where $C = (1 + \sqrt{d_s}) \max\{C_{\max}, R_{\max}, \sigma_0\}$, $\gamma_n(k)$  the maximum information gain~\citep{srinivas}, and $\Tilde{\vpi}_n$ the solution to $\underset{\vpi \in \gS_n}{\arg\max} \min_{\vf \in \gM_n} J_r(\vpi, \vf)$.
Then we have $\forall n \geq n^{*}$ with probability at least $1-\delta$
\begin{equation*}
    \max_{\vpi \in \gR^{\epsilon}_H(\gS_0)} J_r(\vpi) - J_r(\Tilde{\vpi}_n) \leq \epsilon.
\end{equation*}
\label{thm:main-theorem}
\end{theorem}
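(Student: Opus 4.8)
The plan is to prove the two claims separately, but both rest on \Cref{lem:rkhs-confidence-interval}: I condition throughout on the probability-$(1-\delta)$ event that $\vf^*\in\gM_n$ for every $n$, which holds because $\gM_n=\bigcap_{k\le n}\gQ_k(\delta)$ and $\vf^*\in\bigcap_{n\ge0}\gQ_n(\delta)$. For \textbf{safety}, I would show by induction on $n$ that every $\vpi\in\gS_n$ satisfies $J_c(\vpi,\vf^*)\le d$; the base case is \Cref{ass:safe-seed}. The one ingredient needed for the inductive step is a \emph{policy-perturbation lemma}: for any fixed dynamics $\vf$,
\[
J_c(\vpi,\vf)\le J_c(\vpi',\vf)+D(\vpi,\vpi').
\]
This is exactly where the two terms of $D$ arise: decompose the cost difference into (i) the change in running cost at the states visited by $\vpi'$, controlled by $L_c\norm{\vpi'(\vs_t)-\vpi(\vs_t)}$ and clipped at $2C_{\max}$ via \Cref{ass:lipschitz-continuity}, and (ii) the change in the state-visitation distribution, which I bound by the total-variation distance between the one-step Gaussian kernels (equal covariance $\sigma^2\sI$ by \Cref{ass:noise-properties}, means differing by at most $L_f\norm{\vpi'(\vs_t)-\vpi(\vs_t)}$), giving the $TC_{\max}\min\{L_f\norm{\cdot}/\sigma,1\}$ term after a coupling/telescoping argument over the horizon. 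Applying this with $\vf=\vf^*$, together with $J_c(\vpi',\vf^*)\le P_n(\vpi')$ (since $\vf^*\in\gM_n$) and the membership condition $P_n(\vpi')+D(\vpi,\vpi')\le d$ from \cref{eq:safeset-eq}, closes the induction; since each $\vpi_n$ is drawn from $\gS_n$, safety holds for all episodes.

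For the sample-complexity claim I would establish two quantitative lemmas. First, a \emph{model-error (simulation) lemma}: for any $\vpi$ and any $\vf,\vf'\in\gM_n$,
\[
\abs{J_r(\vpi,\vf)-J_r(\vpi,\vf')}\le L\,\beta_n(\delta)\,\E_{\vtau^{\vpi}}\!\left[\sum_{t=0}^{T-1}\norm{\vsigma_{n-1}(\hvs_t,\vpi(\hvs_t))}\right],
\]
with $L$ collecting $T$, $R_{\max}$, $L_f$ and the noise scale (the $T^6$, $C^4$ in \cref{eq: bound on sample complexity} are the accumulated constants); the same bound holds for $J_c$ with $C_{\max}$. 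This follows by propagating the pointwise confidence width $\beta_n\vsigma_{n-1}$ through the Lipschitz dynamics and rewards along the horizon. Second, the standard \emph{information-gain bound}
\[
\sum_{n=1}^{N}\E_{\vtau^{\vpi_n}}\!\left[\sum_{t=0}^{T-1}\norm{\vsigma_{n-1}(\hvs_t,\cdot)}^2\right]\le c_1\,\frac{d_s\,\sigma_0^2}{\log(1+\sigma^{-2}\sigma_0^2)}\,\gamma_N(k),
\]
relating cumulative posterior variance to the maximum information gain $\gamma_N(k)$.

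The heart of the argument is to show $\gR^{\epsilon}_H(\gS_0)\subseteq\gS_{n^*}$. I would partition the $n^*$ episodes into $H+1$ phases (the source of the $(H+1)$ factor). The exploration objective \cref{eq:exploration-op-optimistic} selects, optimistically over $\gM_n$, the policy of largest trajectory uncertainty in $\gS_n$, so its value upper bounds the uncertainty of \emph{every} policy currently in the safe set. Combining the information-gain bound with a pigeonhole argument over a phase shows that each phase contains an episode at which the maximum trajectory uncertainty over the current safe set falls below $\epsilon/L$; by the simulation lemma this yields $\abs{P_n(\vpi')-J_c(\vpi',\vf^*)}\le\epsilon$ for every boundary policy $\vpi'$, so the $\epsilon$-margin built into $\gR^{\epsilon}_h$ guarantees the next layer $\gR^{\epsilon}_{h+1}(\gS_0)$ is certified safe and added to $\gS_n$. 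Iterating over the $H$ phases gives $\gR^{\epsilon}_H(\gS_0)\subseteq\gS_{n^*}\subseteq\gS_n$ for all $n\ge n^*$. To finish, let $\vpi^*\in\argmax_{\vpi\in\gR^{\epsilon}_H(\gS_0)}J_r(\vpi,\vf^*)$, so $\vpi^*\in\gS_n$. Using $\vf^*\in\gM_n$ and that $\Tilde{\vpi}_n$ maximizes the pessimistic reward over $\gS_n$,
\[
J_r(\Tilde{\vpi}_n,\vf^*)\ge\min_{\vf\in\gM_n}J_r(\Tilde{\vpi}_n,\vf)\ge\min_{\vf\in\gM_n}J_r(\vpi^*,\vf)\ge J_r(\vpi^*,\vf^*)-L\,\beta_n(\delta)\,\E_{\vtau^{\vpi^*}}\!\left[\sum_{t}\norm{\vsigma_{n-1}}\right],
\]
and the final term is $\le\epsilon$ by the coupling step (the extra phase absorbs this accuracy requirement), giving $\max_{\vpi\in\gR^{\epsilon}_H(\gS_0)}J_r(\vpi)-J_r(\Tilde{\vpi}_n)\le\epsilon$.

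I expect the \textbf{main obstacle} to be the expansion--exploration coupling. The delicate point is that exploration is confined to the \emph{current} safe set, yet must drive the uncertainty down precisely at the boundary policies $\vpi'$ needed to certify the next reachable layer, and the per-phase pigeonhole must deliver a bound uniform over that boundary rather than merely on a single planned trajectory. Bridging the algorithm's pessimistic surrogate $P_n$ and the true-cost definition of $\gR^{\epsilon}_H$ through the $\epsilon$-margin, and propagating this cleanly through $H$ nested expansions while controlling the growth of $\beta_n(\delta)$ and $\gamma_n(k)$ so that \cref{eq: bound on sample complexity} suffices, is where the real work lies.
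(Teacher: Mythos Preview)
Your plan is essentially the paper's proof: induction via the policy-perturbation bound $J_c(\vpi,\vf^*)\le J_c(\vpi',\vf^*)+D(\vpi,\vpi')$ for safety, a simulation lemma plus the information-gain bound to control the maximum trajectory uncertainty over $\gS_n$, $H{+}1$ phases to capture $\gR^{\epsilon}_H(\gS_0)\subseteq\gS_{n^*}$, and the pessimistic-reward chain $J_r(\Tilde{\vpi}_n,\vf^*)\ge\min_{\vf}J_r(\Tilde{\vpi}_n,\vf)\ge\min_{\vf}J_r(\vpi^*,\vf)\ge J_r(\vpi^*,\vf^*)-\epsilon$ for optimality. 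One small deviation to flag: for the simulation lemma the paper does \emph{not} ``propagate the confidence width through the Lipschitz dynamics'' (which would reintroduce $L_f$ and risk $(1{+}L_f)^T$-type blowup) but instead reuses the same Gaussian total-variation trick you already invoke for safety---bounding $|\E_{\vw}[J_{t+1}(\vf,\cdot)-J_{t+1}(\vf^*,\cdot)]|$ by $TC_{\max}\min\{\norm{\vf-\vf^*}/\sigma,1\}$ and then $\norm{\vf-\vf^*}\le(1+\sqrt{d_s})\beta_{n-1}\norm{\vsigma_{n-1}}$---which is why the constant $C$ in \cref{eq: bound on sample complexity} carries no $L_f$ and the $T$-dependence is polynomial.
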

The theorem shows that \algo{} is safe during all episodes.
%, furthermore, it defines which is the largest safe set we can obtain after $H$ expansion steps if we know the dynamics to $\epsilon$ precision. 
Furthermore, it shows that after finishing the expansion phase, \algo{} achieves an $\epsilon$-optimal solution within $\gR^{\epsilon}_H(\gS_0)$ for the underlying reward function $r$, where $\gR^{\epsilon}_H(\gS_0)$ is the largest safe set we can obtain after $H$ expansion steps if we know the dynamics to $\epsilon$ precision. To the best of our knowledge, we are the first to prove safety and give sample complexity bounds for safe model-based RL algorithms in the episodic setting with continuous state-action spaces.

Intuitively, by maximizing the epistemic uncertainty, we explore our dynamics uniformly among all policies in the safe set $\gS_n$, making our model more confident, i.e., reducing $\vsigma_n$. As our uncertainty within $\gS_n$ shrinks, we add more policies to our safe set (c.f.~\cref{def:safeset}) and thus facilitate its expansion. The proof of \cref{thm:main-theorem} is given in \cref{sec:proofs}. %Overall, another advantage of \algo{} is that since it explores intrinsically, i.e., without external reward signals, \algo{} learns a uniformly accurate dynamics model which can be used to
%\emph{maximize any reward function in a zero-shot fashion} \citep[see][]{sukhija2024optimistic}. 

While the algorithm itself is difficult to implement for continuous state-action spaces, it gives key insights that guide our practical implementation in \Cref{sec:practical-modifications}: \begin{enumerate*}[label=\textbf{(\roman*)}]
    \item maximization of intrinsic rewards for expansion, 
    \item pessimism w.r.t.~plausible dynamics to define a safe set of policies $\gS_n$, and
    \item selecting $\vpi_n$ only from $\gS_n$ to ensure safety.
\end{enumerate*} Building on these insights, we introduce a practical version of \algo{} designed to excel in real-world scenarios, such as visual control tasks.

\subsection{Practical Implementation}
\label{sec:practical-modifications}
\paragraph{Optimizing over safe policies}
In \cref{eq:exploration-op-optimistic} we optimize over the policies within the safe set, where the safe set is defined according to~\cref{def:safeset}. 
This is particularly challenging in continuous state-action spaces since it requires us to maintain the model set $\gM_n$ and the safe set $\gS_n$.
We modify the definition of the safe set which makes the optimization problem more tractable.
\begin{equation}
    \widehat{\gS}_n = \left\{\vpi \in \Pi; \ \text{ s.t.} \max_{\vf' \in \gQ_n} J_c(\vpi, \vf') \leq d\right\}
\end{equation}
Note that $\widehat{\gS}_n \subseteq \gS_n$, making it a conservative estimate of $\gS_n$, therefore selecting policies from $\widehat{\gS}_n$ still preserves the safety guarantees. Furthermore, in $\widehat{\gS}_n$, we are pessimistic w.r.t.~the dynamics $\vf \in \gQ_n$ and thus we can simply use $\vmu_n, \vsigma_n$ to induce pessimism, i.e., we do not have to maintain the model set $\gM_n = \gM_{n-1} \cap \gQ_n$ (c.f.~\cref{definition:well-calibrated-model}). A similar relaxation is made by other safe RL algorithms such as~\cite{berkenkamp2021bayesian, baumann2021gosafe}. 

\looseness=-1
To practically solve \cref{eq:exploration-op-optimistic} we use $\widehat{\gS}_n$ instead of $\gS_{n}$, resulting in the following problem
\begin{align}
 \argmax_{\vpi \in \Pi} \max_{\vf \in \gQ_n} J_n(\vpi, \vf) \ \text{s.t.}\  \max_{\vf' \in \gQ_n} J_c(\vpi, \vf') \leq d.\label{eq:exploration-op-optimistic-practical}
\end{align}
\Cref{eq:exploration-op-optimistic-practical} is a constrained optimization problem with the added complexity of optimizing over the dynamics in $\gQ_n$. Moreover, it does not require us to maintain $\widehat{\gS}_n$ since we implicitly account for it in the constraint in \cref{eq:exploration-op-optimistic-practical}, making it tractable for continuous state-action spaces.
In \Cref{eq:exploration-op-optimistic-practical}, we have to solve $\max_{\vf' \in \gQ_n} J_c(\vpi, \vf')$ to enforce pessimism for safety. To this end, we use the methods from \citet{yu2020mopo} for our experiments. In practice, we solve \Cref{eq:exploration-op-optimistic-practical} by using a CMDP planner based on Log-Barrier SGD~\citep[LBSGD,][]{usmanova2024log}. Further technical details can be found in \Cref{sec:experiment-details}.

\begin{algorithm}[t]
    \caption{\textbf{\algo:} Practical Version}
       \label{alg:safe-practical}
    \begin{algorithmic}[]
        \State {\textbf{Init: }}{Model Set $\gQ_0$}
        \For{episode $n=1, \ldots, n^*$} \algorithmiccomment{Intrinsic exploration phase}
            \State Select $\vpi_n$ by solving \Cref{eq:exploration-op-optimistic-practical} \algorithmiccomment{Prepare policy}
            \State  $\gD_n \leftarrow \textsc{Rollout}(\vpi_n)$ \algorithmiccomment{Collect data}
             \State $\text{Update } \gQ_n \leftarrow \gD_{1:n}$ \algorithmiccomment{Update dynamics}
        \EndFor
        \For{episode $n = n^*, \ldots, N$} \algorithmiccomment{Extrinsic exploration phase}
            \State Select $\vpi_n$ by solving \Cref{eq:optimistic-extrinsic}
            \State  $\gD_n \leftarrow \textsc{Rollout}(\vpi_n)$
             \State $\text{Update } \gQ_n \leftarrow \gD_{1:n}$
         \EndFor
    \end{algorithmic}
\end{algorithm}

\paragraph{From CMDPs to visual control}
\looseness=-1
\algo{} can be seamlessly integrated with state-of-the-art model-based RL methods for learning in visual control tasks~\citep{hafner2019planet, hafner2023mastering}. To tighten the gap between RL and real-world problems, we relax the typical full observability assumption and consider problems where the agent receives an observation $\vo_t \sim p(\cdot | \vs_t)$ instead of $\vs_t$ at each time step. 
To handle partial observability, we choose to base our dynamics model on the Recurrent State Space Model (RSSM) introduced in \citet{hafner2019planet}. The RSSM can be thought of as a sequential variational auto-encoder that learns the (latent) dynamics $\vf$. We leverage approximate Bayesian inference techniques, in particular probabilistic ensembles~\citep{lakshminarayanan2017simple}, to approximate the posterior $p(\vf | \gD_n)$ over RSSMs. In particular, we learn an ensemble of $M$ models and define $\gQ_n$ as $\gQ_n = \bigcup^{M - 1}_{i=0} \{\vf^i\}$. The model's epistemic uncertainty (disagreement) is then used to enforce pessimism w.r.t. the safety constraints and for the intrinsic reward exploration  (see \cref{alg:safe-practical}).
\paragraph{Online policy improvement} 
After the first intrinsic exploration phase, it is often necessary to perform additional learning updates during the second exploitation phase~\citep{sekar2020planning}. Therefore, after $n^*$ iterations of intrinsic exploration, we optimize the extrinsic reward by solving
\begin{align}
 \argmax_{\vpi \in \Pi} \max_{\vf \in \gQ_n} J_r(\vpi, \vf) \ \text{s.t.}\  \max_{\vf' \in \gQ_n} J_c(\vpi, \vf') \leq d.\label{eq:optimistic-extrinsic}
\end{align}

\section{Experiments}
\label{sec:experiments}
In the following, we evaluate \algo{} on state-based and visual control tasks. For the state-based tasks, we use GPs to model the dynamics $\vf^*$. For the visual control tasks, we use the RSSM model from \citet{hafner2019planet} as described in \cref{sec:practical-modifications}. We thus validate both the theoretical and practical aspects of \algo{} in this section.

\subsection{Does \algo{} Explore Safely with GPs?}
\label{sec:gp-experiment}
 \begin{figure}
    \centering
    \includegraphics[width=0.95\textwidth]{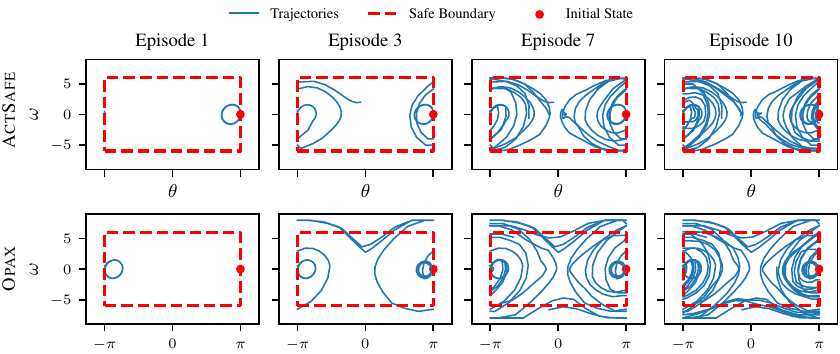}
    \caption{Safe exploration in the \textsc{PendulumSwingup} task. Each plot above visualizes trajectories considered during exploration across all past learning episodes. The red box in the plot depicts the safety boundary in the state space. \algo{} maintains safety throughout learning.}
    \vspace{-0.5cm}
    \label{fig:gp-experiment}
\end{figure}
\begin{wrapfigure}[15]{r}
{0.5\textwidth}
\vspace{-1.3\baselineskip}
  \begin{center}
        \includegraphics{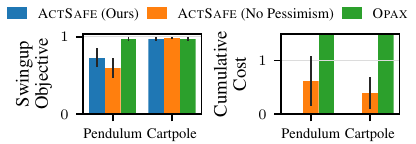}
  \end{center}
  \caption{Evaluation of safety via pessimism and intrinsic exploration. The cumulative cost accumulates all the incurred costs during learning, the reported objective performance is normalized. \algo{} maintains safety during learning while attaining high zero-shot performance on the \textsc{PendulumSwingup} objective at test time.}
  \label{fig:gp-summary}
\end{wrapfigure}
We evaluate \algo{} on the \textsc{Pendulum} and \textsc{Cartpole} environments. Additional details on the experimental setup, including the safety constraints, are provided in \cref{sec:experiment-details}. 
For both environments, we run the algorithms for ten episodes and then use the learned model to plan w.r.t.~known extrinsic rewards after the expansion phase. For extrinsic rewards, we consider the \textsc{Swingup} task.
We study the effects of pessimism with respect to the model uncertainty for safety. To this end, we consider as baselines a version of \algo{} without pessimism, which only uses the mean model $\vmu_n$ for planning and \textsc{Opax}~\citep{sukhija2024optimistic}, an unsafe active exploration algorithm.

We present our results in \Cref{fig:gp-summary}, where we report the performance and the total accumulated costs during exploration of our method. 
We conclude that \algo{} does not incur any costs during learning. In contrast, the variant of \algo{} without pessimism and \textsc{Opax} are unsafe during learning. This validates the necessity of using the model epistemic uncertainty to enforce pessimism during exploration. Note that \algo{} pays a price in terms of performance for pessimism, as it converges to a lower reward value than the other algorithms. 

 In \Cref{fig:gp-experiment} we visualize the trajectories of \algo{} and \textsc{Opax} in the state space during exploration. We observe that both algorithms cover the state space well, however, \algo{} remains within the safety boundary during learning whereas \textsc{Opax} violates the constraints.

\subsection{Does \algo{} Scale to Vision Control?}
\label{sec:vision-control-exp}
\looseness-1
While with GPs we can work closer to theory, scaling them to high dimensions with large data regimes, in particular visual control tasks, is challenging. We demonstrate our practical implementation (\Cref{alg:safe-practical}) on high-dimensional RL tasks. We highlight that ensuring safety with an NN model with randomly initialized weights is impractical without any additional prior knowledge. To this end, for all experiments hereon,  we assume access to an initial data collection (warm-up) period of 200K environment steps, where the agent collects data and uses it to calibrate its world model. This experimental setup is simple as it seamlessly integrates with both off and on-policy algorithms, such as CPO. Furthermore, it simulates a realistic setting, where the agent can collect some data initially in a controlled/supervised setting where safety is not directly penalized. However, after the initial data collection period, the agent is required to be safe during learning. We use the same training procedure across all baselines and environments ~\citep[akin to][]{dalal2018safe}.
%we assume to have access to offline-collected data of 200K environment steps, collected with a random policy . 
In \Cref{sec:additional-experiments}, we present additional experiments that study safe exploration under distribution shifts of the dynamics, effectively leveraging the simulator 
to calibrate the model and imitating sim-to-real transfer.
\paragraph{Safety}
We investigate \algo{}'s performance in terms of constraint satisfaction during learning and compare it with state-of-the-art baseline algorithms for safe vision control ~\citep{as2022constrained,huang2024safedreamer} and with CPO~\citep{achiam2017constrained}. We use the same experimental setup from \sgym{} \citep{Ray2019} and \citet{as2022constrained}, with the \textsc{Point} robot in all tasks.
\begin{figure}
    \centering
\includegraphics[clip,width=0.95\textwidth]{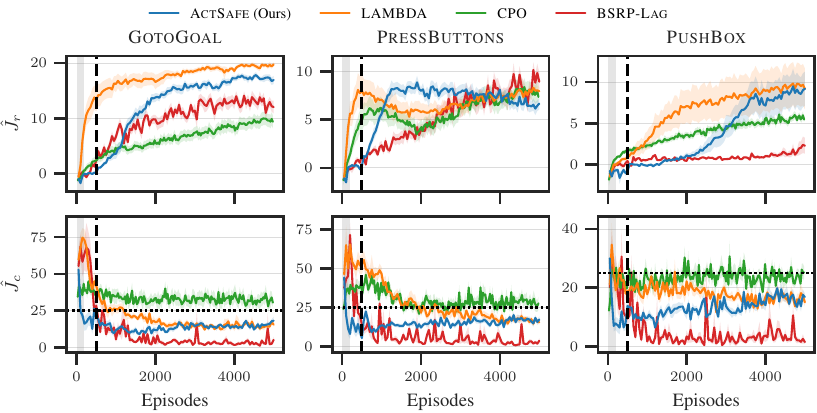}
    \caption{\looseness-1Safety of \algo{} in \sgym{} with vision control. The dotted horizontal line depicts the safety constraint. We report the mean and standard error across 10 seeds. The vertical dashed line illustrates the transition of \algo{} from the intrinsic exploraiton/expansion phase to the extrinsic reward phase. Grey shaded area represents the warm-up phase.}
    \vspace{-0.5cm}
    \label{fig:safety}
\end{figure}
As shown in \Cref{fig:safety}, compared to the baselines, \algo{}, significantly reduces constraint violation on all tasks. Notably, while \algo{} slightly  underperforms \textsc{LAMBDA}, it incurs much smaller costs. This result may be interpreted by the conservatism needed to maintain safety during learning. Furthermore, we observe that \textsc{BSRP-Lag} generally underperforms both algorithms in terms of safety and performance. We provide more details on our comparison in \Cref{sec:experiment-details}. Additionally, we ablate our choice of LBSGD in \Cref{sec:additional-experiments} and highlight its benefits.

\paragraph{Exploration}
In this experiment, we examine the influence of using an intrinsic reward in hard exploration tasks. To this end, we extend tasks from \sgym{} and introduce three new tasks with sparse rewards, i.e., without any reward shaping to guide the agent to the goal. We provide more details about the rewards in \Cref{sec:experiment-details}. In \Cref{fig:sparse-safety} we compare \algo{} with a \textsc{Greedy} baseline that collects trajectories only based on the sparse extrinsic reward. As shown, \algo{} substantially outperforms \textsc{Greedy} in all tasks, while violating the constraint only once in the \textsc{GotoGoal} task.
\begin{figure}
    \centering
\includegraphics[clip,width=0.9\textwidth]{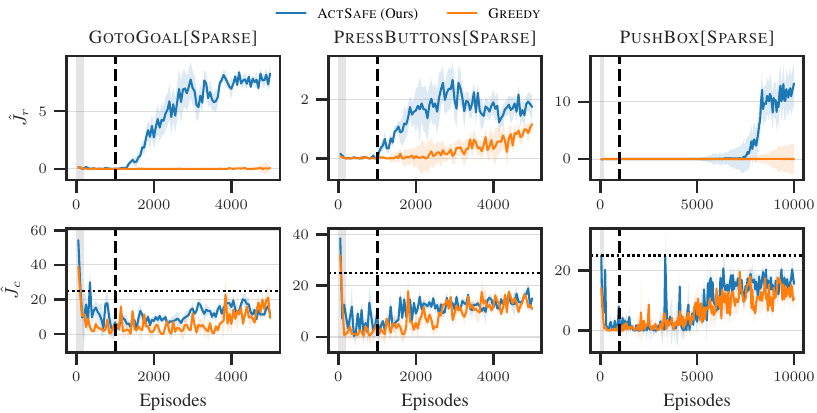}
    \caption{Performance on hard safe exploration tasks. The vertical dashed line illustrates the transition from data collection with intrinsic to extrinsic reward.}
    \vspace{-0.7cm}
    \label{fig:sparse-safety}
\end{figure}
In addition to \sgym{}, we evaluate on \textsc{CartpoleSwingupSparse} from \rwrl{}~\citep{dulacarnold2019challengesrealworldreinforcementlearning} with additional penalty for large actions \citep[see][and \Cref{sec:experiment-details}]{curi2020efficient}. We compare \algo{} with three baselines. \begin{enumerate*}[label=\textbf{(\roman*)}]
    \item \textsc{Uniform}, which samples actions uniformly at random during exploration, 
    \item \textsc{Optimistic}, which uses the model epistemic uncertainty estimates as exploration reward bonuses and
    \item \textsc{Greedy}, which optimizes the extrinsic reward directly.
\end{enumerate*}
\looseness=-1
\Cref{fig:cartpole-3e} indicates that using uncertainty quantification for exploration is crucial, as only \algo{} and \textsc{Optimistic} find non-trivial policies. Despite that, \algo{} outperforms \textsc{Optimistic}. Furthermore, even though \textsc{Uniform} initially explores with an unsafe policy, it is insufficient to learn a good dynamics model, and thus underperforms \algo{}. This is mainly due to the undirected exploration strategy of \textsc{Uniform}, which does not leverage the model's epistemic uncertainty.
\begin{wrapfigure}[13]{r}
{0.5\textwidth}
\vspace{-0.75\baselineskip}
  \begin{center}
        \includegraphics{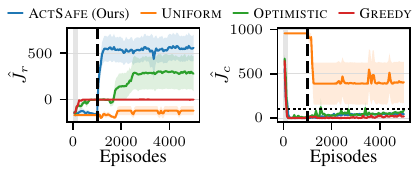}
  \end{center}
  \caption{Hard exploration performance in \textsc{CartpoleSwingupSparse} of \rwrl{} benchmark. We report the mean and standard error.}
  \label{fig:cartpole-3e}
\end{wrapfigure}

\vspace{-0.5cm}
\paragraph{Discussion}
\looseness -1
Our experiments underscore the following key findings. First, intuitively, in the GP setting, where our implementation is closer to theory, pessimism w.r.t.~the model uncertainty plays a crucial role as we achieve strict safe exploration. Second, in our visual control experiments, using a small fraction of data ($<$5$\%$ of total data collected) as the warm-up period for calibrating the model and policy is sufficient for drastically reducing constraint violation. Learning safely typically requires some form of prior knowledge about the problem, hence, using the data from the warm-up period keeps the experiment setup realistic %(e.g., \citet{kidambi2020morel,matsushima2020deployment,feng2023finetuning} and references therein) 
without imposing specific domain knowledge and thus sacrificing generality. Third, 
in addition to exploring safely \algo{},  also solves tough exploration problems with the intrinsic rewards playing a crucial role.
These results underline the importance of intrinsic exploration in RL, especially in safety-critical tasks. Moreover, \algo{} transfers directly from the GP setting to the vision control setting and in both cases our results show that \algo{} outperforms the baselines in terms of both safety and performance. We provide additional experiments in \cref{sec:additional-experiments}, where ablate our choice of the LBSGD planner, evaluate \algo{} on a setting with distribution shifts in the dynamics and on a realistic robotics task from the state-of-the-art humanoid benchmark from \citet{sferrazza2024humanoidbench}.

\section{Conclusions}
In this paper, we introduce \algo{}, a safe model-based RL algorithm that leverages epistemic uncertainty as an intrinsic reward to learn a dynamics model efficiently and safely. We theoretically study systems with continuous state-action spaces and non-linear dynamics that lie in the RKHS, and provide guarantees on safety and near-optimality. We derive a practical variant of \algo{}, and demonstrate safe exploration and competitive performance with a Gaussian process dynamics model. Furthermore, we identify the key concepts that enable safe exploration with \algo{} and demonstrate how one can heuristically apply them to solve high-dimensional safe RL problems. Our empirical results showcase the importance of intrinsic rewards in the context of safety, demonstrating that \algo{} outperforms the baselines in the majority of tasks. In conclusion, \algo{} represents a significant advancement in safe reinforcement learning methods, enhancing both theoretical insights and practical applications.

\newpage

\subsubsection*{Acknowledgments}
We thank Jonas Hübotter for the insightful discussion and feedback on this work. 
This project has received funding from the Swiss National Science Foundation under NCCR Automation, grant agreement 51NF40 180545, the Microsoft Swiss Joint Research Center, grant of the Hasler foundation (grant no. 21039) and the SNSF Postdoc Mobility Fellowship 211086. Bhavya Sukhija was gratefully supported by ELSA (European Lighthouse on Secure and Safe AI) funded by the European Union under grant agreement No. 101070617.

\bibliography{main}
\bibliographystyle{iclr2025_conference}

\appendix
\section{Proofs}
\label{sec:proofs}
In the following, we prove \cref{thm:main-theorem}. First, we provide the analytical formula for the mean and epistemic uncertainty of a GP model.
We denote $\vx \coloneq (\vs, \va)$, so that

\begin{align}
\begin{split}
    \mu_{n,j} (\vx)& = {\bm{k}}_{n}^\top(\vx)({\bm K}_{n} + \sigma^2_\epsilon \bm{I})^{-1}\vy_{n, j}  \label{eq:GPposteriors}\\
     \sigma^2_{n, j}(\vx) & =  k(\vx, \vx) - {\bm k}^T_{n}(\vx)({\bm K}_{n}+\sigma^2_\epsilon \bm{I})^{-1}{\bm k}_{n}(\vx)
\end{split}
\end{align}
where $\vy_{n, j} = [s'_{i, j}]^\top_{i \leq n}$ is the vector of the $j$-th element of the observed next states $\vs'_i$, $\bm{k}_{n}(\vx) = [ k(\vx, \vx_i)]^\top_{i \leq n}$, and ${\bm K}_{n} = [ k(\vx_i, \vx_l)]_{i,l \leq n}$ is the kernel matrix. By concatenating the element-wise posterior mean and standard deviation, we obtain $\vmu_n(\vx) = [\mu_{n,j}(\vx)]^\top_{j \leq d_s}$ and
 $\vsigma_n(\vx) = [\sigma_{n,j}(\vx)]^\top_{j \leq d_s}$.

\begin{corollary}\label{cor: upper bound on cost}
Let \cref{ass:rkhs-func} hold, then we have for all $\vpi \in \Pi$, $n \geq 0$, with probability at least $1-\delta$
\begin{equation*}
    P_n(\vpi) \geq J_c(\vpi, \vf^*) 
\end{equation*}
\end{corollary}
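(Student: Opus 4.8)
The plan is to read off the claim directly from the structure of the pessimistic estimate. Since $P_n(\vpi) = \max_{\vf \in \gM_n} J_c(\vpi, \vf)$ is a maximization of $J_c(\vpi, \cdot)$ over the plausible model set $\gM_n$, it suffices to establish that the true dynamics $\vf^*$ belong to $\gM_n$ with high probability, uniformly over all $n$. Once $\vf^* \in \gM_n$, the inequality $P_n(\vpi) = \max_{\vf \in \gM_n} J_c(\vpi, \vf) \geq J_c(\vpi, \vf^*)$ is immediate, because $\vf^*$ is then a feasible candidate in the maximization; no regularity of $J_c$ and no concentration argument beyond calibration is required, since we only need a lower bound witnessed by a single point.

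First I would invoke \Cref{lem:rkhs-confidence-interval} together with \Cref{ass:rkhs-func}, which jointly guarantee that the GP posterior $(\vmu_n, \vsigma_n, \beta_n(\delta))$ forms a well-calibrated statistical model in the sense of \Cref{definition:well-calibrated-model}. By that definition, with probability at least $1-\delta$ we have $\vf^* \in \bigcap_{n \ge 0}\gQ_n(\delta)$. The essential point is that this is a single \emph{all-time} event holding simultaneously across every episode, which is precisely what lets the corollary's conclusion be uniform in $n$ rather than only per-episode.

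Next I would unfold the recursion $\gM_n \defeq \gM_{n-1} \cap \gQ_n$ from \Cref{def:safeset}. A one-line induction gives $\gM_n = \bigcap_{m=0}^{n}\gQ_m$ (with $\gM_0 = \gQ_0$), so on the high-probability event above we have $\vf^* \in \gQ_m$ for every $m \le n$, and hence $\vf^* \in \gM_n$ for all $n \ge 0$. Combining this membership with the definition of $P_n$ yields $P_n(\vpi) \geq J_c(\vpi, \vf^*)$ for every $\vpi \in \Pi$ on the same event of probability at least $1-\delta$.

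The only delicate part is the probabilistic bookkeeping: the statement must hold for all $n$ and all $\vpi$ on one common event. I would therefore stress that the calibration guarantee is stated as the intersection $\bigcap_{n\ge0}\gQ_n(\delta)$, so the single failure probability $\delta$ already covers every episode at once, and that the quantifier over $\vpi$ is free because the containment $\vf^* \in \gM_n$ does not depend on the policy at all. Thus the union over policies incurs no additional cost, and the result follows without any further analysis.
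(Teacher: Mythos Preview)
Your proposal is correct and follows essentially the same approach as the paper: use \Cref{lem:rkhs-confidence-interval} (via \Cref{ass:rkhs-func}) to get $\vf^* \in \gQ_n$ for all $n$ on a single event of probability $\ge 1-\delta$, conclude $\vf^* \in \gM_n$ from the intersection structure, and then read off $P_n(\vpi) = \max_{\vf \in \gM_n} J_c(\vpi,\vf) \ge J_c(\vpi,\vf^*)$. Your write-up is more explicit about unfolding the recursion $\gM_n = \bigcap_{m\le n}\gQ_m$ and about the uniform-in-$n$ and free-over-$\vpi$ quantifiers, but the argument is identical to the paper's.
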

\begin{proof}
    Note that $\vf^* \in \gQ_n$ for all $n \geq 0$ with probability at least $1-\delta$ (\cref{lem:rkhs-confidence-interval}). Therefore, $\vf^* \in \gM_n$. Furthermore, 
    \begin{align*}
        P_n(\vpi) &= \max_{\vf \in \gM_n} J_c(\vpi, \vf) \\
        &\geq J_c(\vpi, \vf^*)
    \end{align*}
\end{proof}

\begin{lemma}[Difference in Policy performance, \citet{sukhija2024optimistic}]
Consider any function $r: \gS \times \gA \to \R$.
Let $J_{r, k}(\vpi, \vf^*, \vs_k) =  \E_{\vtau^{\vpi}}\left[\sum_{t=k}^{T-1} r(\vs_t, \vpi(\vs_{t}))\right]$ and $A_{r, k}(\vpi, \vs, \va) =  \E_{\vtau^{\vpi}}\left[r(\vs, \va) + J_{r, k + 1}(\vpi, \vf^*, \vs') - J_{r, k}(\vpi, \vf^*, \vs) \right]$ with $\vs' = \vf^*(\vs, \va) + \vw$. For simplicity we refer to  $J_{r, 0}(\vpi, \vf^*,\vs_0) = J_r (\vpi, \vf^*, \vs_0)$. 
The following holds for all $\vs_0 \in \gS$:
\begin{equation*}
    J_r (\vpi', \vf^*, \vs_0) -  J_r (\vpi, \vf^*,\vs_0) = \E_{\vtau^{\vpi'}} \left[\sum^{T-1}_{t=0} A_{r, t}(\vpi, \vs'_t, \vpi'(\vs'_t))\right]
\end{equation*}
\label{lemma:difference-in-policy}
\end{lemma}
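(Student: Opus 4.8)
The plan is to prove this finite-horizon, time-inhomogeneous performance-difference identity by a telescoping argument that combines the definition of the advantage $A_{r,t}$ with the tower property of conditional expectation over the process noise. The only randomness anywhere is the i.i.d.\ noise $\vw_t$ driving the dynamics $\vs'_{t+1} = \vf^*(\vs'_t, \vpi'(\vs'_t)) + \vw_t$ along the rollout $\vtau^{\vpi'}$; the crucial bookkeeping point to keep straight is that the value-to-go functions $J_{r,t}(\vpi, \vf^*, \cdot)$ appearing everywhere are those of the \emph{comparison} policy $\vpi$, while the outer expectation is taken over trajectories generated by $\vpi'$.

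First I would unfold the advantage inside the right-hand side. By its definition, $A_{r,t}(\vpi, \vs'_t, \vpi'(\vs'_t)) = r(\vs'_t, \vpi'(\vs'_t)) + \E_{\vw}\left[J_{r,t+1}(\vpi, \vf^*, \vf^*(\vs'_t, \vpi'(\vs'_t)) + \vw)\right] - J_{r,t}(\vpi, \vf^*, \vs'_t)$, where the only randomness in the middle term is the single-step noise $\vw$. Taking the outer expectation $\E_{\vtau^{\vpi'}}$ and applying the tower property, the inner next-state noise expectation merges with the trajectory law: since $\vs'_{t+1} = \vf^*(\vs'_t, \vpi'(\vs'_t)) + \vw_t$ under $\vtau^{\vpi'}$, we obtain $\E_{\vtau^{\vpi'}}\left[\E_{\vw}\left[J_{r,t+1}(\vpi, \vf^*, \vf^*(\vs'_t, \vpi'(\vs'_t)) + \vw)\right]\right] = \E_{\vtau^{\vpi'}}\left[J_{r,t+1}(\vpi, \vf^*, \vs'_{t+1})\right]$. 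This is the one step where the two sources of randomness must be reconciled, and it is the place where correctness hinges on a careful conditioning argument.

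Then I would split the summed expectation into a reward part and a value-difference part. The reward part $\E_{\vtau^{\vpi'}}\left[\sum_{t=0}^{T-1} r(\vs'_t, \vpi'(\vs'_t))\right]$ is exactly $J_r(\vpi', \vf^*, \vs_0)$ by definition. The remaining part $\E_{\vtau^{\vpi'}}\left[\sum_{t=0}^{T-1}\bigl(J_{r,t+1}(\vpi, \vf^*, \vs'_{t+1}) - J_{r,t}(\vpi, \vf^*, \vs'_t)\bigr)\right]$ telescopes to $\E_{\vtau^{\vpi'}}\left[J_{r,T}(\vpi, \vf^*, \vs'_T) - J_{r,0}(\vpi, \vf^*, \vs'_0)\right]$. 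Here $J_{r,T}(\vpi, \vf^*, \cdot) = 0$ because it is an empty sum, and $\vs'_0 = \vs_0$ is the fixed initial state, so this reduces to $-J_r(\vpi, \vf^*, \vs_0)$. Adding the two parts yields exactly $J_r(\vpi', \vf^*, \vs_0) - J_r(\vpi, \vf^*, \vs_0)$, as claimed.

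The argument is essentially the classical performance-difference lemma, so no single step is hard; the only genuine care needed is the time-inhomogeneous index bookkeeping of $t$ versus $t+1$ and the merging of the per-step noise expectation into the trajectory expectation. I would therefore state the telescoping explicitly and invoke the tower property by name, rather than leaving the reconciliation of the two expectations implicit.
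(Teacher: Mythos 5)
Your proof is correct and is the standard finite-horizon performance-difference argument: unfold the advantage, merge the one-step noise expectation into the trajectory law via the tower property (valid because $\vw_t$ is i.i.d.\ and independent of the past, so $\vs'_{t+1} = \vf^*(\vs'_t, \vpi'(\vs'_t)) + \vw_t$ has exactly the conditional law appearing in $A_{r,t}$), and telescope the value-to-go terms using $J_{r,T} \equiv 0$ and $\vs'_0 = \vs_0$. The paper gives no inline proof at all --- it defers entirely to Lemma~5 of \citet{sukhija2024optimistic} --- and your telescoping argument is precisely the proof underlying that cited lemma, so your write-up supplies the deferred details faithfully and nothing is missing.
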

\begin{proof}
See Lemma 5. \citet{sukhija2024optimistic}.
\end{proof}

\begin{lemma}[Comparing safety costs of policies]
    \begin{equation*}
    J_c (\vpi, \vf^*, \vs_0) -  J_c (\vpi', \vf^*, \vs_0) \leq D(\vpi, \vpi')
\end{equation*}
\label{lem:safety-cost-comparison}
\end{lemma}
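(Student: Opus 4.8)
The plan is to derive the bound from the performance-difference identity of \Cref{lemma:difference-in-policy} applied to the cost $c$, and then to control the resulting per-step advantage by separating two effects: the change in the instantaneous cost, and the shift in the one-step-ahead state distribution induced by swapping $\vpi$ for $\vpi'$ under the fixed true dynamics $\vf^*$. Concretely, I would first instantiate \Cref{lemma:difference-in-policy} with $r=c$, choosing the roles so that the trajectory expectation is taken under $\vpi'$, giving $J_c(\vpi',\vf^*,\vs_0)-J_c(\vpi,\vf^*,\vs_0)=\E_{\vtau^{\vpi'}}\big[\sum_{t=0}^{T-1}A_{c,t}(\vpi,\vs_t,\vpi'(\vs_t))\big]$, and hence $J_c(\vpi,\vf^*,\vs_0)-J_c(\vpi',\vf^*,\vs_0)=-\E_{\vtau^{\vpi'}}\big[\sum_t A_{c,t}(\vpi,\vs_t,\vpi'(\vs_t))\big]$. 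This already produces the $\vtau^{\vpi'}$-expectation appearing in $D(\vpi,\vpi')$, so it suffices to bound each negated advantage pointwise and sum.

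Next I would expand the advantage using the cost Bellman recursion $J_{c,t}(\vpi,\vf^*,\vs)=c(\vs,\vpi(\vs))+\E_{\vw}[J_{c,t+1}(\vpi,\vf^*,\vf^*(\vs,\vpi(\vs))+\vw)]$. Substituting this into $A_{c,t}(\vpi,\vs,\vpi'(\vs))$ makes the value-at-$\vs$ term cancel, leaving $-A_{c,t}(\vpi,\vs,\vpi'(\vs))=\big(c(\vs,\vpi(\vs))-c(\vs,\vpi'(\vs))\big)+\big(\E_{\vw}[g(\vf^*(\vs,\vpi(\vs))+\vw)]-\E_{\vw}[g(\vf^*(\vs,\vpi'(\vs))+\vw)]\big)$, where $g\defeq J_{c,t+1}(\vpi,\vf^*,\cdot)$. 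The first term I bound by $L_c$-Lipschitzness of $c$ together with $c\in[0,C_{\max}]$, yielding $\min\{L_c\norm{\vpi(\vs)-\vpi'(\vs)},2C_{\max}\}$, which is exactly the first summand of $D$.

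The crux is the second term, a difference of expectations of the bounded cost-to-go $g\in[0,TC_{\max}]$ under the two next-state laws. By \Cref{ass:noise-properties} these laws are $P=\gN(\vf^*(\vs,\vpi(\vs)),\sigma^2\sI)$ and $Q=\gN(\vf^*(\vs,\vpi'(\vs)),\sigma^2\sI)$, and I would use $\E_P[g]-\E_Q[g]\le(\sup g-\inf g)\,\mathrm{TV}(P,Q)\le TC_{\max}\,\mathrm{TV}(P,Q)$. I then control the total variation between these equal-covariance Gaussians via Pinsker and the closed-form Gaussian relative entropy $\KL(P\|Q)=\norm{\vf^*(\vs,\vpi(\vs))-\vf^*(\vs,\vpi'(\vs))}^2/(2\sigma^2)$, which gives $\mathrm{TV}(P,Q)\le\min\{1,\norm{\vf^*(\vs,\vpi(\vs))-\vf^*(\vs,\vpi'(\vs))}/(2\sigma)\}$; applying $L_f$-Lipschitzness of $\vf^*$ then yields $\mathrm{TV}(P,Q)\le\min\{1,L_f\norm{\vpi(\vs)-\vpi'(\vs)}/\sigma\}$ (the spare factor of two only loosens the stated bound). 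Hence the second term is at most $TC_{\max}\min\{L_f\norm{\vpi(\vs)-\vpi'(\vs)}/\sigma,1\}$, the second summand of $D$. Summing the two pointwise bounds over $t$, evaluating at the states of $\vtau^{\vpi'}$, and taking the expectation recovers $D(\vpi,\vpi')$ exactly. The main obstacle is this Gaussian total-variation step: obtaining the $1/\sigma$ scaling and the clean truncation at $1$ relies on the Pinsker-plus-Gaussian-KL route and the boundedness of $g$ by $TC_{\max}$, and one must keep the advantage decomposition oriented so that the surviving expectation is under $\vpi'$, matching the definition of $D$.
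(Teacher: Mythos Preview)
Your proposal is correct and follows essentially the same structure as the paper's proof: apply the performance-difference identity of \Cref{lemma:difference-in-policy} with $r=c$, decompose the negated advantage into an instantaneous-cost term and a next-state-distribution-shift term, and bound each using Lipschitzness and boundedness. The only technical variation is in the Gaussian mean-shift step: the paper invokes \citet[Lemma~C.2]{kakade2020information} directly (bounding the difference by $\sqrt{\E[g^2]}\cdot\min\{\norm{\Delta}/\sigma,1\}$ and then using $g\le TC_{\max}$), whereas you obtain the same factor via total variation plus Pinsker and the closed-form Gaussian KL---both routes land on $TC_{\max}\min\{L_f\norm{\vpi(\vs)-\vpi'(\vs)}/\sigma,1\}$, with your version in fact carrying a spare factor of two.
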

\begin{proof}
For notational convenience we will omit the dependance on $\vf^*$. 
    \begin{align*}
         &J_c (\vpi, \vs_0) - J_c (\vpi', \vs_0) = \E_{\vtau^{\vpi'}} \left[\sum^{T-1}_{t=0} -A_{c, t}(\vpi, \vs'_t, \vpi'(\vs'_t))\right] \\
         &= \E_{\vtau^{\vpi'}} \left[\sum^{T-1}_{t=0} - (c(\vs_t, \vpi'(\vs_t)) - c(\vs_t, \vpi(\vs_t)))\right] \\ 
         &+ 
         \E_{\vtau^{\vpi'}} \left[\sum^{T-1}_{t=0}\E_{\vs'_{t+1}|\vs_t, \vpi(\vs_t)}\left[
         J_{c, k + 1}(\vpi, \vs'_{t+1})\right] - \E_{\vs'_{t+1}|\vs_t, \vpi'(\vs_t)}\left[
         J_{c, k + 1}(\vpi, \vs'_{t+1})\right]
         \right] \\
         &= \E_{\vtau^{\vpi'}}\left[\sum^{T-1}_{t=0} 
         c(\vs_t, \vpi(\vs_t)) - c(\vs_t, \vpi'(\vs_t))\right] \\
         &+ 
         \E_{\vtau^{\vpi'}}\left[\sum^{T-1}_{t=0}\E_{\vs'_{t+1}|\vs_t, \vpi(\vs_t)}\left[
         J_{c, k + 1}(\vpi, \vs'_{t+1})\right] - \E_{\vs'_{t+1}|\vs_t, \vpi'(\vs_t)}\left[
         J_{c, k + 1}(\vpi, \vs'_{t+1})\right]\right] \\
        &\leq \E_{\vtau^{\vpi'}}\left[\sum^{T-1}_{t=0} 
         \min\left\{L_c\norm{\vpi'(\vs_t) - \vpi(\vs_t)}, 2C_{\max}\right\}\right] \\ 
         &+  \E_{\vtau^{\vpi'}}\left[\sum^{T-1}_{t=0} \sqrt{\E_{\vs'_{t+1}|\vs_t, \vpi(\vs_t)}\left[
         J^2_{c, k + 1}(\vpi, \vs'_{t+1})\right]} \min\left\{\frac{\norm{\vf^*(\vs_t, \vpi(\vs_t)) - \vf^*(\vs_t, \vpi'(\vs_t))}}{\sigma}, 1\right\}\right] \tag*{\citep[Lemma C.2.]{kakade2020information}}\\
         &\leq  \E_{\vtau^{\vpi'}}\left[\sum^{T-1}_{t=0} 
        \min\left\{L_c\norm{\vpi'(\vs_t) - \vpi(\vs_t)}, 2C_{\max}\right\} + TC_{\max} \min\left\{\frac{L_\vf\norm{\vpi'(\vs_t) - \vpi(\vs_t)}}{\sigma}, 1\right\}\right] \\
        &= D(\vpi, \vpi')
    \end{align*}
\end{proof}
\begin{lemma}
    Let \cref{ass:lipschitz-continuity} -- \cref{ass:rkhs-func} hold. Then we have $\forall n \geq 0$,  $\vpi \in \gS_n \setminus \gS_{n-1}$ with probability at least $1-\delta$,
        $J_c(\vpi) \leq d$.
    \label{lem:safety-of-newly-added-policies}
\end{lemma}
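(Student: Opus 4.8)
The plan is to prove the lemma by directly combining the two estimates already established: the pessimistic upper bound on the true cost from \Cref{cor: upper bound on cost}, and the Lipschitz-type comparison of costs between nearby policies from \Cref{lem:safety-cost-comparison}. These are precisely the two ingredients the expansion operator in \cref{eq:safeset-eq} was designed to exploit, so I expect no new machinery to be required; the argument should reduce to unpacking the membership condition and chaining inequalities.

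First I would condition on the single high-probability event $\{\vf^* \in \bigcap_{m \ge 0} \gQ_m(\delta)\}$, which holds with probability at least $1-\delta$ by \Cref{lem:rkhs-confidence-interval}. On this event $\vf^* \in \gM_n$ for every $n$, so \Cref{cor: upper bound on cost} yields $P_n(\vpi') \ge J_c(\vpi', \vf^*)$ for all policies $\vpi'$ and all $n$ simultaneously. The important point is that this is one event, valid for all episodes at once, so there is neither a union bound nor any growth of the failure probability with $n$; the comparison bound of \Cref{lem:safety-cost-comparison} is deterministic and consumes no additional probability budget.

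Next I would unpack the membership condition. Fix $n \ge 1$ and $\vpi \in \gS_n \setminus \gS_{n-1}$. By \cref{eq:safeset-eq}, the only way $\vpi$ enters $\gS_n$ without being in $\gS_{n-1}$ is through the expansion set, so there exists $\vpi' \in \gS_{n-1}$ with $P_n(\vpi') + D(\vpi, \vpi') \le d$. Chaining the facts then gives
\begin{equation*}
J_c(\vpi, \vf^*) \le J_c(\vpi', \vf^*) + D(\vpi, \vpi') \le P_n(\vpi') + D(\vpi, \vpi') \le d,
\end{equation*}
where the first inequality is \Cref{lem:safety-cost-comparison}, the second is \Cref{cor: upper bound on cost}, and the last is the membership condition itself. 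Note that this does not require $\vpi'$ to be safe in any recursive sense, since the pessimistic bound $P_n(\vpi') \ge J_c(\vpi', \vf^*)$ holds for arbitrary $\vpi'$; hence no induction is needed for the newly added policies. The base case $n = 0$ is handled by convention $\gS_{-1} = \emptyset$, so that $\gS_0 \setminus \gS_{-1} = \gS_0$, whose policies are safe directly by \Cref{ass:safe-seed}.

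I do not anticipate a genuine obstacle, as the substantive work was front-loaded into \Cref{lem:safety-cost-comparison} (bounding the cost gap between $\vpi$ and $\vpi'$ by $D$ via the Lipschitz assumptions and the total-variation control of \cite{kakade2020information}) and into \Cref{cor: upper bound on cost} (the pessimism guarantee). The only subtlety worth flagging in the write-up is the probabilistic bookkeeping: one must take the confidence event uniformly over all episodes so that the $1-\delta$ guarantee does not degrade as $n$ grows, which is exactly why \Cref{definition:well-calibrated-model} is stated as an all-time guarantee $\vf^* \in \bigcap_{n \ge 0} \gQ_n(\delta)$ rather than a per-episode one.
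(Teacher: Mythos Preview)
Your proposal is correct and follows essentially the same argument as the paper: unpack the expansion condition to obtain a witness $\vpi' \in \gS_{n-1}$ with $P_n(\vpi') + D(\vpi,\vpi') \le d$, then chain \Cref{cor: upper bound on cost} and \Cref{lem:safety-cost-comparison}. Your explicit handling of the single all-time confidence event and of the $n=0$ base case via $\gS_{-1}=\emptyset$ is more careful than the paper's version, but the substance is identical.
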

\begin{proof}
    Consider any $\vpi \in \gS_n \setminus \gS_{n-1}$. By \cref{def:safeset}, we have that there exists a $\vpi'$ in $\gS_{n-1}$ such that
    \begin{equation*}
        P_n(\vpi') + D(\vpi, \vpi') \leq d
    \end{equation*}
    Therefore,
    \begin{align*}
       d &\geq  P_n(\vpi') + D(\vpi, \vpi') \\
       &\geq J_c(\vpi', \vf^*) + D(\vpi, \vpi') \tag{\cref{cor: upper bound on cost}}\\
       &\geq J_c(\vpi, \vf^*) \tag{\cref{lem:safety-cost-comparison}}.
    \end{align*}
\end{proof}

\begin{corollary}[All policies in $\gS_n$ are safe]
 Let \cref{ass:lipschitz-continuity} -- \cref{ass:rkhs-func} hold. Then we have $\forall n \geq 0$,  $\vpi \in \gS_n$ with probability at least $1-\delta$, $J_c(\vpi) \leq d$.
 \label{corollary:all-policies-are-safe}
\end{corollary}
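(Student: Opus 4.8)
The plan is to establish \Cref{corollary:all-policies-are-safe} by induction on $n$, leveraging the recursive structure of the safe set definition (\Cref{def:safeset}) together with \Cref{lem:safety-of-newly-added-policies}, which already handles the ``newly added'' policies at each step. The key observation is that the safe set is built up monotonically: $\gS_n = \gS_{n-1} \cup (\gS_n \setminus \gS_{n-1})$, so any policy in $\gS_n$ either belonged to $\gS_{n-1}$ already or was added fresh at step $n$.

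\begin{proof}
We proceed by induction on $n$. For the base case $n=0$, every $\vpi \in \gS_0$ satisfies $J_c(\vpi) \leq d$ by \Cref{ass:safe-seed}. For the inductive step, assume that with probability at least $1-\delta$ all $\vpi \in \gS_{n-1}$ satisfy $J_c(\vpi) \leq d$. Now take any $\vpi \in \gS_n$. By \Cref{def:safeset}, we have $\gS_n = \gS_{n-1} \cup (\gS_n \setminus \gS_{n-1})$, so either $\vpi \in \gS_{n-1}$, in which case $J_c(\vpi) \leq d$ holds by the inductive hypothesis, or $\vpi \in \gS_n \setminus \gS_{n-1}$, in which case $J_c(\vpi) \leq d$ holds by \Cref{lem:safety-of-newly-added-policies}. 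In either case $J_c(\vpi) \leq d$, completing the induction. Since both \Cref{lem:safety-of-newly-added-policies} and the base case hold on the same high-probability event where $\vf^* \in \bigcap_{n \geq 0}\gQ_n(\delta)$ (\Cref{lem:rkhs-confidence-interval}), the conclusion holds for all $n \geq 0$ simultaneously with probability at least $1-\delta$.
\end{proof}

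The main subtlety to handle carefully is the \emph{uniform} probability bound across all episodes. Each of the supporting results (\Cref{cor: upper bound on cost}, \Cref{lem:safety-of-newly-added-policies}) is stated as holding with probability at least $1-\delta$, and a naive union bound over $n$ would degrade the guarantee. The clean resolution is that all these statements rest on the single event $\{\vf^* \in \bigcap_{n \geq 0}\gQ_n(\delta)\}$ guaranteed by the all-time well-calibrated model (\Cref{definition:well-calibrated-model} and \Cref{lem:rkhs-confidence-interval}); on this one event the deterministic inclusions $\vf^* \in \gM_n$ hold for every $n$ at once, so the induction carries the same $1-\delta$ confidence through all episodes without any further probabilistic cost. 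This is essentially bookkeeping, but it is the one place where the argument could look circular or lose its probability budget if phrased carelessly, so I would make the shared-event structure explicit rather than reinvoking each lemma's probability separately inside the induction.
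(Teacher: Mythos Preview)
Your proof is correct and follows essentially the same approach as the paper: induction on $n$, with the base case handled by \Cref{ass:safe-seed} and the inductive step split into $\gS_{n-1}$ (induction hypothesis) versus $\gS_n \setminus \gS_{n-1}$ (\Cref{lem:safety-of-newly-added-policies}). Your explicit treatment of the shared high-probability event $\{\vf^* \in \bigcap_{n \geq 0}\gQ_n(\delta)\}$ is a welcome clarification that the paper's proof leaves implicit.
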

\begin{proof}
    We prove this by induction. For $n=0$, this holds due to \cref{ass:safe-seed}.
    Consider any $n > 0$. By induction, $\forall \vpi \in  \gS_n$ we have that $J_c(\vpi, \vf^*) \leq d$. Hence, we focus on $\vpi \in \gS_{n+1} \setminus \gS_n$. In \cref{lem:safety-of-newly-added-policies}, we show $J_c(\vpi, \vf^*) \leq d$ for all $\vpi \in \gS_{n+1} \setminus \gS_n$. This completes the proof.
\end{proof}

\begin{lemma}
    Consider any positive and bounded function $c \in [0, C_{\max}]$. Let \cref{ass:lipschitz-continuity} -- \ref{ass:rkhs-func} hold. Then we have $\forall n \geq 0$, $\forall \vf \in \gM_n$. with probability at least $1-\delta$
    \begin{equation*}
        |J_c(\vpi, \vf) - J_c(\vpi, \vf^*)| \leq TC_{\max}  \E_{\vtau^{\vpi}} \left[\sum^{T-1}_{t=0} \frac{(1 + \sqrt{d_s}) \beta_{n-1}(\delta) \norm{\vsigma_{n-1}(\vs_t, \vpi(\vs_t))}}{\sigma}\right].
    \end{equation*}
    \label{lem:bound-on-cost}
\end{lemma}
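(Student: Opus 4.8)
The plan is to bound the difference $|J_c(\vpi, \vf) - J_c(\vpi, \vf^*)|$ by expressing it as a telescoping sum of per-step advantage-like terms and then controlling each term using the well-calibrated confidence intervals. First I would invoke \Cref{lemma:difference-in-policy} (the difference-in-policy-performance lemma), but applied in a different way than in \Cref{lem:safety-cost-comparison}: here the two policies are \emph{identical} (both equal to $\vpi$), but they are rolled out under two \emph{different} dynamics, $\vf$ and $\vf^*$. The key observation is that since both $\vf, \vf^* \in \gM_n \subseteq \gQ_n$, the confidence intervals of \Cref{definition:well-calibrated-model} give, for every coordinate $j$ and every $\vz \in \gZ$, that $|f_j(\vz) - f^*_j(\vz)| \leq 2\beta_{n-1}(\delta)\sigma_{n-1,j}(\vz)$ by the triangle inequality, hence $\norm{\vf(\vz) - \vf^*(\vz)} \leq 2\sqrt{d_s}\,\beta_{n-1}(\delta)\norm{\vsigma_{n-1}(\vz)}$ (or a comparable bound matching the $(1+\sqrt{d_s})$ constant after combining the state-distribution-shift term).

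The main engine of the proof is the per-step decomposition. Writing the difference as a sum over $t$ of terms that compare the two next-state distributions $\vs'_{t+1} \mid \vs_t, \vf$ versus $\vs'_{t+1} \mid \vs_t, \vf^*$, I would bound the change in the future cost-to-go $J_{c,t+1}(\vpi, \cdot)$ induced by shifting the mean of the Gaussian transition kernel from $\vf^*(\vs_t, \vpi(\vs_t))$ to $\vf(\vs_t, \vpi(\vs_t))$. Exactly as in the proof of \Cref{lem:safety-cost-comparison}, I would apply the total-variation / KL bound of \citet[Lemma C.2.]{kakade2020information}: since the two transition kernels are Gaussians with the same covariance $\sigma^2 \sI$ and means differing by $\vf - \vf^*$, the discrepancy in expected cost-to-go is controlled by $\sqrt{\E[J^2_{c,t+1}]}\min\{\norm{\vf - \vf^*}/\sigma, 1\}$. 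Since $c \in [0, C_{\max}]$ and the horizon is $T$, the cost-to-go satisfies $J_{c,t+1} \le TC_{\max}$, so $\sqrt{\E[J^2_{c,t+1}]} \leq TC_{\max}$. Substituting the RKHS calibration bound $\norm{\vf - \vf^*} \leq (1+\sqrt{d_s})\beta_{n-1}(\delta)\norm{\vsigma_{n-1}(\vs_t, \vpi(\vs_t))}$ (absorbing the running-cost Lipschitz contribution, which vanishes here because the action $\vpi(\vs_t)$ is the same under both dynamics) yields each summand bounded by $TC_{\max}(1+\sqrt{d_s})\beta_{n-1}(\delta)\norm{\vsigma_{n-1}(\vs_t,\vpi(\vs_t))}/\sigma$, and summing over $t$ and taking the expectation over $\vtau^{\vpi}$ produces exactly the claimed bound.

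The hard part will be handling the subtlety that the trajectory distribution itself differs under $\vf$ versus $\vf^*$, so the expectation $\E_{\vtau^\vpi}$ in the statement must be interpreted consistently (presumably under $\vf^*$, or telescoped so that only one kernel's rollout appears at each step). In \Cref{lemma:difference-in-policy} the decomposition fixes one policy's rollout distribution ($\vtau^{\vpi'}$) and evaluates advantages of the other; the analogue here fixes one dynamics' rollout and charges the single-step mean shift of the other. I would take care to align the direction of the telescoping so that the surviving rollout measure matches the $\E_{\vtau^\vpi}$ written in the statement, and verify that the per-step advantage genuinely reduces to a pure transition-mean shift (with no running-cost term), since $c$ is a known, policy-and-state-only function evaluated identically under both dynamics. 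The constant $(1+\sqrt{d_s})$ rather than $2\sqrt{d_s}$ suggests the authors combine the coordinatewise $\ell_\infty$ calibration bound with a slightly sharper norm conversion; I would track this constant carefully but treat it as routine once the structural decomposition is in place.
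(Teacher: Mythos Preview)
Your proposal is essentially the paper's proof. The only clarification worth making is that the telescoping decomposition you need is not literally \Cref{lemma:difference-in-policy} (which compares two \emph{policies} under the same dynamics $\vf^*$), but its dynamics-mismatch analogue, a simulation lemma: the paper invokes \citet[Corollary~2]{sukhija2024optimistic} to write
\[
J_c(\vpi,\vf)-J_c(\vpi,\vf^*)=\E_{\vtau^{\vpi}}\!\left[\sum_{t=0}^{T-1} J_{c,t+1}(\vpi,\vf,\vs_{t+1})-J_{c,t+1}(\vpi,\vf,\vs'_{t+1})\right],
\]
with the outer expectation under the \emph{true} dynamics $\vf^*$ and the two next states differing only in which transition mean is used. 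You correctly anticipate exactly this structure in your last paragraph, so the substance is right; just note that it comes from a separate cited result rather than from \Cref{lemma:difference-in-policy} itself. From there the paper proceeds exactly as you outline: bound each summand via \citet[Lemma~C.2]{kakade2020information}, use $J_{c,t+1}\le TC_{\max}$, and then apply the calibration bound $\norm{\vf(\vz)-\vf^*(\vz)}\le (1+\sqrt{d_s})\beta_{n-1}(\delta)\norm{\vsigma_{n-1}(\vz)}$, which the paper takes directly from \citet[Corollary~3]{sukhija2024optimistic} rather than deriving it.
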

\begin{proof}
From \citet[Corollary 2.]{sukhija2024optimistic} we have,
    \begin{align*}
        J_c(\vpi, \vf) - J_c(\vpi, \vf^*) &= \E_{\vtau^{\vpi}}\left[\sum^{T-1}_{t=0} J_{c, t+1}(\vpi, \vf, \vs_{t+1}) -  J_{c, t+1}(\vpi, \vf, \vs'_{t+1})\right], \tag{Expectation w.r.t $\vpi$ under true dynamics $\vf^*$} \\
    &\text{with } \vs_{t+1} = \vf^*(\vs_{t}, \vpi(\vs_t)) + \vw_t, \\
    &\text{and } \vs'_{t+1} = \vf(\vs_{t}, \vpi(\vs_t))+ \vw_t.
    \end{align*}
    Furthermore, $J_{c, t+1}(\vpi, \vf, \vs) \in [0, TC_{\max}]$ for all $\vpi, \vf, \vs$, and $t$. Therefore, given $\vs_t$,
    \begin{align*}
        &\left|\E_{\vw_t}\left[J_{c, t+1}(\vpi, \vf, \vs'_{t+1}) -  J_{c, t+1}(\vpi, \vf, \vs_{t+1})\right]\right| \\
        &\leq \max\left\{\sqrt{\E_{\vw_t}[J^2_{c, t+1}(\vpi, \vf, \vs'_{t+1})]},  \sqrt{\E_{\vw_t}[J^2_{c, t+1}(\vpi, \vf, \vs_{t+1})]}\right\}\min\left\{\frac{\norm{\vf^*(\vs_t, \vpi(\vs_t)) - \vf(\vs_t, \vpi(\vs_t))}}{\sigma}, 1\right\} \tag*{ \citep[Lemma C.2.]{kakade2020information}}\\
        &\leq T C_{\max}\min\left\{\frac{\norm{\vf^*(\vs_t, \vpi(\vs_t)) - \vf(\vs_t, \vpi(\vs_t))}}{\sigma}, 1\right\} \\
        &\leq T C_{\max} \min\left\{\frac{(1 + \sqrt{d_s}) \beta_{n-1}(\delta) \norm{\vsigma_{n-1}(\vs_t, \vpi(\vs_t))}}{\sigma}, 1\right\} \tag*{\citep[Corollary 3]{sukhija2024optimistic}}
    \end{align*}
\end{proof}

From hereon let $C = \frac{(1 + \sqrt{d_s}) \max\{R_{\max}, C_{\max}, \sigma_0\}}{\sigma}$.

\begin{lemma}
     Let \cref{ass:lipschitz-continuity} -- \ref{ass:rkhs-func} hold. Then we have $\forall n, N \geq 0$ with probability at least $1-\delta$
     \begin{equation*}
          \max_{\vpi \in \gS_n} \E_{\vtau^{\vpi}} \left[\sum^{T-1}_{t=0} \norm{\vsigma_{N + n-1}(\vs_t, \vpi(\vs_t))}\right]
        \leq T^2 C\frac{\sqrt{d_s} \sigma_0}{\sqrt{\log(1 + \sigma^{-2}\sigma^2_0})}\sqrt{\frac{\beta^2_{n+N-1}(\delta)\gamma_{n+N-1}(k)}{N}}.
     \end{equation*}
     \label{lemm: bound on sigmas}
\end{lemma}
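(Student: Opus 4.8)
The plan is to reduce the claim to a cumulative information-gain bound for the optimistic exploration objective \eqref{eq:exploration-op-optimistic}, restricted to the safe set, and then to invoke the standard GP information-gain machinery. I would use three ingredients throughout: monotonicity of the posterior variance, nestedness of the safe sets, and the fact (from \cref{lem:rkhs-confidence-interval}) that $\vf^* \in \gM_m$ for all $m$ with probability $1-\delta$.

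First I would fix an arbitrary $\vpi \in \gS_n$. Since adding data only shrinks the GP posterior variance, $\norm{\vsigma_{N+n-1}(\vz)} \le \norm{\vsigma_{m-1}(\vz)}$ for every $\vz$ and every $m \le N+n$. Because the safe sets are nested, $\gS_n \subseteq \gS_m$ for $m \ge n$, so $\vpi$ remains a feasible candidate in the maximization defining $(\vpi_m,\vf_m)$ at every episode $m>n$; combined with $\vf^*\in\gM_m$ and $\E_{\vtau^{\vpi,\vf^*}}=\E_{\vtau^\vpi}$, the pair $(\vpi,\vf^*)$ is feasible there, giving for every $m\in\{n+1,\dots,n+N\}$
\[ \E_{\vtau^\vpi}\Big[\sum_{t=0}^{T-1}\norm{\vsigma_{N+n-1}(\vs_t,\vpi(\vs_t))}\Big] \le \E_{\vtau^\vpi}\Big[\sum_{t=0}^{T-1}\norm{\vsigma_{m-1}(\vs_t,\vpi(\vs_t))}\Big] \le J_{r_m}(\vpi_m,\vf_m). \]
Averaging over the $N$ episodes $m=n+1,\dots,n+N$ removes the dependence on $\vpi$ on the right-hand side, and since this holds for every $\vpi\in\gS_n$, taking the maximum gives $\max_{\vpi\in\gS_n}\E_{\vtau^\vpi}[\cdots] \le \frac1N\sum_{m=n+1}^{n+N} J_{r_m}(\vpi_m,\vf_m) \le \frac1N\sum_{m=1}^{n+N} J_{r_m}(\vpi_m,\vf_m)$, the last step dropping only nonnegative terms. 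It then suffices to show $\sum_{m=1}^{n+N} J_{r_m}(\vpi_m,\vf_m) \le T^2 C \frac{\sqrt{d_s}\,\sigma_0}{\sqrt{\log(1+\sigma^{-2}\sigma_0^2)}}\sqrt{N\,\beta_{n+N-1}^2(\delta)\,\gamma_{n+N-1}(k)}$, since dividing by $N$ reproduces exactly the claimed right-hand side.

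Third I would establish this cumulative bound following the optimistic-exploration analysis of \citet{curi2020efficient,sukhija2024optimistic}. Each $J_{r_m}(\vpi_m,\vf_m)$ is an expectation along the \emph{imagined} trajectory of the optimistic model $\vf_m$, whereas the information gain accumulates from the \emph{real} rollouts under $\vf^*$; I would first pass from imagined to real trajectories via a difference-of-value-functions argument in the spirit of \cref{lem:bound-on-cost}, controlling $\norm{\vf_m-\vf^*}$ by $(1+\sqrt{d_s})\beta_{m-1}(\delta)\norm{\vsigma_{m-1}}$, which is where the extra factors of $T$ and of $C=(1+\sqrt{d_s})\max\{R_{\max},C_{\max},\sigma_0\}/\sigma$ enter. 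I would then apply Cauchy--Schwarz across the $NT$ summands together with Jensen's inequality to reduce to $\sum_{m,t}\E\big[\sum_{j=1}^{d_s}\sigma_{m-1,j}^2\big]$, and finally use the elementary inequality $x \le \frac{\sigma_0^2}{\log(1+\sigma^{-2}\sigma_0^2)}\log(1+\sigma^{-2}x)$ for $x\in[0,\sigma_0^2]$ with the definition of $\gamma_{n+N-1}(k)$ to convert the summed squared uncertainties into the maximum information gain; summing over the $d_s$ output coordinates yields the $\sqrt{d_s}$ factor.

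I expect this third step to be the main obstacle. The reduction in the first two paragraphs is essentially bookkeeping built on monotonicity, nestedness, optimism and averaging. The delicate part is the cumulative-uncertainty bound: one must (i) control the discrepancy between the optimistically imagined trajectory and the true trajectory along which data is actually collected, whose compounding over the horizon produces the polynomial-in-$T$ and the $C$ factor, and (ii) pass from the \emph{expected} uncertainties in $J_{r_m}$ to the \emph{realized} sum appearing in the information gain, which requires a martingale concentration argument since the real rollouts are stochastic. Tracking $T$, $d_s$, $\beta_{n+N-1}$, $\sigma_0$ and $\gamma_{n+N-1}$ simultaneously so as to land on the stated constant is the most error-prone part, but no new idea beyond the established optimistic-exploration analysis is needed.
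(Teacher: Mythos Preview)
Your strategy is the same as the paper's---monotonicity of $\vsigma$, nestedness of $\gS_n$, optimism to compare with $(\vpi_m,\vf_m)$, a simulation-lemma transfer from imagined to true rollouts (\cref{lem:bound-on-cost}), Cauchy--Schwarz, and the information-gain bound---but there is a genuine bookkeeping error that breaks the reduction as written.

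The problematic step is the extension
\[
\frac1N\sum_{m=n+1}^{n+N} J_{r_m}(\vpi_m,\vf_m)\ \le\ \frac1N\sum_{m=1}^{n+N} J_{r_m}(\vpi_m,\vf_m)
\]
followed by the claim that it ``then suffices to show'' $\sum_{m=1}^{n+N} J_{r_m}(\vpi_m,\vf_m)\le \mathrm{const}\cdot\sqrt{N\,\beta^2\gamma}$. That target inequality is false in general: the left-hand side has $n+N$ terms, and the tools you list (transfer to real rollouts, Cauchy--Schwarz, info gain) only yield a bound of order $\sqrt{(n+N)\,\gamma_{n+N-1}}$, not $\sqrt{N\,\gamma_{n+N-1}}$. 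Take $N=1$ with $n$ large to see the failure. Your own third paragraph betrays the inconsistency when you speak of ``Cauchy--Schwarz across the $NT$ summands'' after having already padded to $(n+N)T$ summands.

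The fix is exactly what the paper does: keep the average over the $N$ episodes $m=n+1,\dots,n+N$, pass from $J_{r_m}(\vpi_m,\vf_m)$ to $\E_{\vtau^{\vpi_m}}[\sum_t\norm{\vsigma_{m-1}}]$ under the \emph{true} dynamics (this is where the extra $TC\beta$ enters), apply Cauchy--Schwarz over these $NT$ terms to pull out $\sqrt{NT}$, and only \emph{then} extend the remaining sum of squared uncertainties inside the square root to all episodes $1,\dots,n+N-1$ so that the information-gain bound $\gamma_{n+N-1}(k)$ applies. In short: extend after Cauchy--Schwarz, not before.
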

\begin{proof}
Consider any $N > 0$,
    \begin{align*}
        \max_{\vpi \in \gS_n} \E_{\vtau^{\vpi}} \left[\sum^{T-1}_{t=0} \norm{\vsigma_{N + n-1}(\vs_t, \vpi(\vs_t))}\right]
        &\leq 
        \frac{1}{N}\sum^{N-1}_{i=0}  \max_{\vpi \in \gS_n} \E_{\vtau^{\vpi}} \left[\sum^{T-1}_{t=0} \norm{\vsigma_{n + i}(\vs_t, \vpi(\vs_t))}\right] \tag{Monotonocity of the variance} \\
        &\leq \frac{1}{N}\sum^{N-1}_{i=0}  \max_{\vpi \in \gS_{n+i}} \E_{\vtau^{\vpi}} \left[\sum^{T-1}_{t=0} \norm{\vsigma_{n + i}(\vs_t, \vpi(\vs_t))}\right] \tag{Monotonocity of the safe set} \\
        &=\frac{1}{N}\sum^{n+N-1}_{i=n} \E_{\vtau^{\vpi^*_{i}}} \left[\sum^{T-1}_{t=0} \norm{\vsigma_{i}(\vs_t, \vpi^*_{i}(\vs_t))}\right] \tag{Definition of $\vpi^*_i$} \\
        &=\frac{1}{N}\sum^{n+N-1}_{i=n} \E_{\vtau^{\vpi_{i}}} \left[\sum^{T-1}_{t=0} \norm{\vsigma_{i}(\vs_t, \vpi_{i}(\vs_t))}\right] + \frac{1}{N}(J(\vpi^*_i) - J(\vpi_i))
    \end{align*}
    Let $r_i = J_i(\vpi^*_i) - J(\vpi_i)$, where $\vpi_i$ is the policy proposed by \cref{eq:exploration-op-optimistic}. We analyze this regret term. 
    Note that since, we optimistically pick dynamics from $\gM_n$, we have $J_i(\vpi^*_i) \leq J(\vpi_i, \vf_i)$, where $\vf_i$ are the optimistic dynamics. Therefore, $r_i \leq J(\vpi_i, \vf_i) - J(\vpi_i, \vf^*)$. Hence, we can invoke \cref{lem:bound-on-cost} to get
    \begin{equation*}
        r_i \leq T C \left[\sum^{T-1}_{t=0} \beta_{i}(\delta) \norm{\vsigma_{i}(\vs_t, \vpi(\vs_t))}\right].
    \end{equation*}
    Therefore,
    \begin{align*}
         \max_{\vpi \in \gS_n} \E_{\vtau^{\vpi}} \left[\sum^{T-1}_{t=0} \norm{\vsigma_{N + n-1}(\vs_t, \vpi(\vs_t))}\right]
        &\leq \frac{ T C \beta_{n+N-1}(\delta)}{N}\sum^{n+N-1}_{i=n} \E_{\vtau^{\vpi_{i}}} \left[\sum^{T-1}_{t=0}  \norm{\vsigma_{i}(\vs_t, \vpi_{i}(\vs_t))}\right] \\
        &\leq \frac{T C \beta_{n+N-1}(\delta)}{N}\sum^{n+N-1}_{i=n} \E_{\vtau^{\vpi_{i}}} \left[\sum^{T-1}_{t=0}  \norm{\vsigma_{i}(\vs_t, \vpi_{i}(\vs_t))}\right] \\
        &\leq  \frac{T C \beta_{n+N-1}(\delta)}{N} \sqrt{N T} \sqrt{\sum^{n+N-1}_{i=n} \E_{\vtau^{\vpi_{i}}} \left[\sum^{T-1}_{t=0}  \norm{\vsigma_{i}(\vs_t, \vpi_{i}(\vs_t))}^2\right]} \tag{Cauchy-Schwartz}\\
        &\leq \frac{T C \beta_{n+N-1}(\delta)}{N} \sqrt{N T} \sqrt{\sum^{n+N-1}_{i=0} \E_{\vtau^{\vpi_{i}}} \left[\sum^{T-1}_{t=0}  \norm{\vsigma_{i}(\vs_t, \vpi_{i}(\vs_t))}^2\right]} \\
        &\leq \frac{T C \frac{\sqrt{Td_s} \sigma_0}{\sqrt{\log(1 + \sigma^{-2}\sigma^2_0})} \beta_{n+N-1}(\delta)}{N} \sqrt{N T} \sqrt{\gamma_{n+N-1}(k)} \tag*{\citep[Lemma 17]{curi2020efficient}}\\
        &=  T^2 C\frac{\sqrt{d_s} \sigma_0}{\sqrt{\log(1 + \sigma^{-2}\sigma^2_0})}\sqrt{\frac{\beta^2_{n+N-1}(\delta)\gamma_{n+N-1}(k)}{N}}
    \end{align*}
    \end{proof}
    
    \begin{lemma}
    Let \cref{ass:lipschitz-continuity} -- \ref{ass:rkhs-func} hold and define $N_n$ to be the smallest integer such that 
    \begin{equation*}
        T^{3} C^2 \frac{\sqrt{d_s} \sigma_0}{\sqrt{\log(1 + \sigma^{-2}\sigma^2_0})}  \beta^2_{n + N_n-1}(\delta) \sqrt{\frac{\gamma_{n+N_n-1}(k)}{N_n}} \leq \epsilon.
    \end{equation*}
   Then, we have $\forall \vpi \in \gS_n$, $\vf \in \gM_{n+N_n-1}$ with probability at least $1-\delta$
   \begin{equation*}
        |J_c(\vpi, \vf) - J_c(\vpi, \vf^*)| \leq \epsilon, \; \text{and,} \; |J_r(\vpi, \vf) - J_r(\vpi, \vf^*)| \leq \epsilon.
   \end{equation*}

   \label{lemma: bound on uncertainty in S_n}
    \end{lemma}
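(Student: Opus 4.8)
The plan is to chain the two estimates already in hand: \Cref{lem:bound-on-cost}, which bounds any value-function discrepancy $|J_c(\vpi,\vf)-J_c(\vpi,\vf^*)|$ by an expected sum of posterior standard deviations along the $\vpi$-trajectory, and \Cref{lemm: bound on sigmas}, which controls exactly that expected sum uniformly over $\vpi \in \gS_n$ after $N$ further episodes. Composing them and simplifying the constants via the definition $C = (1+\sqrt{d_s})\max\{R_{\max},C_{\max},\sigma_0\}/\sigma$ should reproduce, term for term, the left-hand side of the inequality defining $N_n$.

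Concretely, I would first fix $\vpi\in\gS_n$ and $\vf\in\gM_{n+N_n-1}$ and apply \Cref{lem:bound-on-cost} with its running index $n$ replaced by $n+N_n$, so that the hypothesis $\vf\in\gM_{n+N_n-1}$ is met and the bound is expressed through $\beta_{n+N_n-1}(\delta)$ and $\vsigma_{n+N_n-1}$. Since $\vpi\in\gS_n$, I would then upper bound the trajectory expectation by $\max_{\vpi\in\gS_n}(\cdots)$ and invoke \Cref{lemm: bound on sigmas} with $N=N_n$ (note its index $N+n-1$ becomes precisely $n+N_n-1$, matching the previous step). This yields
\begin{equation*}
|J_c(\vpi,\vf)-J_c(\vpi,\vf^*)| \leq T C_{\max}\frac{(1+\sqrt{d_s})\beta_{n+N_n-1}(\delta)}{\sigma}\cdot T^2 C\frac{\sqrt{d_s}\sigma_0}{\sqrt{\log(1+\sigma^{-2}\sigma_0^2)}}\sqrt{\frac{\beta_{n+N_n-1}^2(\delta)\gamma_{n+N_n-1}(k)}{N_n}}.
\end{equation*}

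Collapsing constants is then routine: the leading prefactor $T C_{\max}(1+\sqrt{d_s})/\sigma$ is at most $TC$, the two $\beta$ factors combine to $\beta_{n+N_n-1}^2(\delta)$, and the right-hand side becomes $T^3 C^2 \frac{\sqrt{d_s}\sigma_0}{\sqrt{\log(1+\sigma^{-2}\sigma_0^2)}}\beta_{n+N_n-1}^2(\delta)\sqrt{\gamma_{n+N_n-1}(k)/N_n}$, which is exactly the quantity the definition of $N_n$ forces to be $\leq\epsilon$. The reward bound follows verbatim, since \Cref{lem:bound-on-cost} holds for any function valued in $[0,C_{\max}]$ and applying it to $r\in[0,R_{\max}]$ merely swaps $C_{\max}$ for $R_{\max}$ in the prefactor, which is still dominated by $C$ through the $\max\{R_{\max},C_{\max},\sigma_0\}$. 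All invoked statements hold on the single high-probability event $\{\vf^*\in\bigcap_{n\ge0}\gQ_n(\delta)\}$ of \Cref{lem:rkhs-confidence-interval}, so they hold simultaneously with probability at least $1-\delta$ without any further union bound.

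There is no deep obstacle here—the substantive work (turning cumulative uncertainties into an information-gain bound through Cauchy--Schwarz and $\gamma_n(k)$) is already packaged in \Cref{lemm: bound on sigmas}. The only points demanding care are bookkeeping ones: aligning the index shift $n\mapsto n+N_n$ so that both lemmas speak about $\vsigma_{n+N_n-1}$ and $\beta_{n+N_n-1}(\delta)$, and verifying that the two leading constants multiply to at most $T^3C^2$ so that the composite bound matches the defining inequality for $N_n$ exactly rather than up to an unaccounted factor.
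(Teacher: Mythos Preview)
Your proposal is correct and follows essentially the same approach as the paper: apply \Cref{lem:bound-on-cost} at index $n+N_n$, bound the resulting expected uncertainty sum via \Cref{lemm: bound on sigmas} with $N=N_n$, collapse the constants into $T^3C^2$, and then repeat verbatim for $J_r$. Your bookkeeping on the index alignment and the constant absorption $TC_{\max}(1+\sqrt{d_s})/\sigma\le TC$ is exactly what the paper does (and is in fact stated more carefully than in the paper's terse chain of inequalities).
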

 \begin{proof}
    \begin{align*}
         |J_c(\vpi, \vf) - J_c(\vpi, \vf^*)| &\leq TC_{\max}  \E_{\vtau^{\vpi}} \left[\sum^{T-1}_{t=0} \frac{(1 + \sqrt{d_s}) \beta_{n + N_n-1}(\delta) \norm{\vsigma_{n + N_n-1}(\vs_t, \vpi(\vs_t))}}{\sigma}\right] \tag{\cref{lem:bound-on-cost}}\\
         &\leq TC \beta_{n + N_n-1}  \left[\sum^{T-1}_{t=0} \norm{\vsigma_{n + N_n-1}(\vs_t, \vpi(\vs_t))}\right] \\
         &\leq TC \beta_{n + N_n-1}  T^2 C\frac{\sqrt{d_s} \sigma_0}{\sqrt{\log(1 + \sigma^{-2}\sigma^2_0})} \beta_{n+N_n-1}(\delta)  \sqrt{\frac{\gamma_{n+N_n-1}(k)}{N_n}} \tag{\cref{lemm: bound on sigmas}}\\
         &\leq \epsilon
    \end{align*}
    We can apply the same inequalities for $J_r$.
\end{proof}

\begin{corollary}
    Let \cref{ass:lipschitz-continuity} -- \ref{ass:rkhs-func} hold. Consider any $n\geq0$ and define $N_n$ as in \cref{lemma: bound on uncertainty in S_n}. Then we have  with probability at least $1-\delta$
    \begin{equation*}
        \gS_{n+N_n} \supseteq \gR^{\varepsilon}(\gS_n).
    \end{equation*}
    \label{cor:expansion-of-safe-set}
\end{corollary}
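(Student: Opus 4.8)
The plan is to unwind the definition of the one-step reachability operator $\gR^{\epsilon}(\gS_n)$ and the safe-set expansion in \cref{def:safeset}, and to show that the uncertainty contraction guaranteed by \cref{lemma: bound on uncertainty in S_n} is exactly what converts the $\epsilon$-slack in the reachable set into membership in $\gS_{n+N_n}$. I would fix an arbitrary $\vpi \in \gR^{\epsilon}(\gS_n)$ and argue by cases.

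The trivial case is $\vpi \in \gS_n$: since the safe sets are nested, i.e. $\gS_m \supseteq \gS_{m-1}$ for every $m$ by \cref{def:safeset}, and $N_n \ge 1$, monotonicity immediately gives $\vpi \in \gS_n \subseteq \gS_{n+N_n}$. The substantive case is $\vpi \in \gR^{\epsilon}(\gS_n)\setminus \gS_n$. By definition of the reachability operator there exists $\vpi' \in \gS_n$ with $J_c(\vpi',\vf^*) + D(\vpi,\vpi') \le d - \epsilon$. I would then check the membership condition for $\gS_{n+N_n}$, namely the existence of a policy in $\gS_{n+N_n-1}$ that is $D$-close enough to $\vpi$ under the \emph{pessimistic} cost. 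The natural candidate is $\vpi'$ itself: nestedness gives $\vpi' \in \gS_n \subseteq \gS_{n+N_n-1}$.

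It then remains to bound $P_{n+N_n}(\vpi') = \max_{\vf\in\gM_{n+N_n}} J_c(\vpi',\vf)$. Here I would combine two facts: the model sets are \emph{decreasing}, so $\gM_{n+N_n} \subseteq \gM_{n+N_n-1}$, hence every $\vf$ entering the maximization obeys the bound of \cref{lemma: bound on uncertainty in S_n}, giving $J_c(\vpi',\vf) \le J_c(\vpi',\vf^*) + \epsilon$ for all $\vpi' \in \gS_n$; taking the maximum over $\vf$ yields $P_{n+N_n}(\vpi') \le J_c(\vpi',\vf^*) + \epsilon$. Combining with the reachability condition, $P_{n+N_n}(\vpi') + D(\vpi,\vpi') \le J_c(\vpi',\vf^*) + \epsilon + D(\vpi,\vpi') \le (d-\epsilon) + \epsilon = d$, which is precisely the admission criterion in \cref{def:safeset} for $\vpi$ to enter $\gS_{n+N_n}$. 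Hence $\vpi \in \gS_{n+N_n}$, closing both cases.

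I expect the only delicate point to be bookkeeping of the two opposite monotonicities -- $\gS_n$ increasing and $\gM_n$ decreasing in $n$ -- together with the off-by-one index in \cref{lemma: bound on uncertainty in S_n}, which is stated for $\gM_{n+N_n-1}$, so that the bound is legitimately inherited by $\gM_{n+N_n}$ inside the pessimistic maximization. One should also note that $\vf^*$ remaining in $\gM_{n+N_n}$ (via \cref{cor: upper bound on cost}) is what keeps the comparison between $P_{n+N_n}$ and $J_c(\cdot,\vf^*)$ meaningful. The inequalities themselves are immediate once these inclusions are in place, and everything holds on the single $1-\delta$ probability event on which \cref{lemma: bound on uncertainty in S_n} and the well-calibration of the model hold simultaneously.
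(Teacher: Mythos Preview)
Your proposal is correct and follows essentially the same approach as the paper: both arguments use \cref{lemma: bound on uncertainty in S_n} to convert the $\epsilon$-slack in the definition of $\gR^\epsilon(\gS_n)$ into the admission criterion of \cref{def:safeset}, with the witness policy $\vpi'\in\gS_n\subseteq\gS_{n+N_n-1}$. The paper phrases it as a contradiction and bounds $P_{n+N_n-1}(\vpi')$ directly, whereas you argue directly and are in fact slightly more explicit about the index shift from $\gM_{n+N_n-1}$ (where the lemma lives) to $\gM_{n+N_n}$ (which governs $P_{n+N_n}$) via the monotonicity $\gM_{n+N_n}\subseteq\gM_{n+N_n-1}$.
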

\begin{proof}
    From \cref{lemma: bound on uncertainty in S_n}, we have $\forall \vpi \in \gS_n$, $\vf \in \gM_{n+N_n-1}$,
        $|J_c(\vpi, \vf) - J_c(\vpi, \vf^*)| \leq \epsilon$, therefore $P_{n+N_n - 1}(\vpi) \leq J_c(\vpi, \vf^*) + \epsilon$.
        For the sake of contradiction, assume there exists a policy $\vpi \in \gR^{\varepsilon}(\gS_n) \setminus \gS_{n+N_n}$. We study the case where $\vpi \in  \gR^{\varepsilon}(\gS_n) \setminus \gS_n$ else we have a contradiction ($\gS_n \subseteq \gS_{n+N_n}$).
        Since $\vpi \in  \gR^{\varepsilon}(\gS_n) \setminus \gS_n$, there exists a $\vpi' \in \gS_n$ such that
        $J_c(\vpi') + D(\vpi, \vpi') \leq d -\epsilon$ (see~\cref{thm:main-theorem}). Hence, we get 
        \begin{align*}
            d &\geq J_c(\vpi') + \epsilon + D(\vpi, \vpi') \\
            &\geq P_{n+N_n -1 }(\vpi') + D(\vpi, \vpi').
        \end{align*}
        Since, $\vpi' \in \gS_n \subseteq \gS_{n + N_n -1}$, by the definition of the safe set (c.f. ~\cref{def:safeset}), this implies that $\vpi \in \gS_{n+N_n}$, which is a contradiction.
\end{proof}

A key property of $N_n$ is that it increases monotonously with $n$. Moreover, for a given $n\geq 0$, $N_n$ is the smallest integer satisfying
\begin{equation*}
     N_n \geq \frac{\gamma_{n+N_n-1}(k)                \beta^4_{n + N_n-1}(\delta) T^{6} C^4\frac{d_s \sigma^2_0}{\log(1 + \sigma^{-2}\sigma^2_0)}}{\epsilon^2}.
\end{equation*}
Both functions $n \mapsto \gamma_n$, and $n \mapsto \beta_n$ are monotonically increasing with $n$. Hence increasing $n$, increases the right-hand side of the inequality, and therefore $N_n$.
\begin{lemma}
      Let \cref{ass:lipschitz-continuity} -- \ref{ass:rkhs-func} hold and consider $n^{*} \geq (H + 1) N_{n^*}$. Then we have with probability at least $1-\delta$ for all $n\geq n^*$ 
    \begin{equation*}
        S_{n} \supseteq \gR^{\varepsilon}_H(\gS_0).
    \end{equation*}
    \label{lemma:convergence-to-the-reachable-safe-set}
\end{lemma}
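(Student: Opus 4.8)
The plan is to prove the slightly stronger statement $\gS_{hN_{n^*}} \supseteq \gR^{\varepsilon}_h(\gS_0)$ for every $h \in \{0, \dots, H\}$ by induction on $h$, and then conclude for all $n \ge n^*$ by monotonicity of the safe set. I would work throughout on the single event of probability at least $1-\delta$ on which $\vf^* \in \bigcap_{n\ge0}\gQ_n(\delta)$ (\cref{definition:well-calibrated-model}); on this one event all the preceding corollaries, in particular \cref{cor:expansion-of-safe-set}, hold simultaneously for every $n$, so no additional union bound is incurred and the final probability stays $1-\delta$.

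Two structural facts drive the argument and I would record them first. Writing $\gR^{\varepsilon}(\cdot)$ for the one-step expansion operator appearing in \cref{cor:expansion-of-safe-set}, the recursive definition in \cref{thm:main-theorem} says precisely that the reachable set is its $h$-fold composition, $\gR^{\varepsilon}_h(\gS_0) = \gR^{\varepsilon}\!\left(\gR^{\varepsilon}_{h-1}(\gS_0)\right)$. Moreover $\gR^{\varepsilon}(\cdot)$ is monotone: if $A \subseteq B$ then $\gR^{\varepsilon}(A) \subseteq \gR^{\varepsilon}(B)$, because enlarging the seed can only enlarge the set of witnesses $\vpi'$ satisfying $J_c(\vpi') + D(\vpi,\vpi') \le d - \epsilon$. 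I would combine these with the two monotonicities used elsewhere in the excerpt, namely $\gS_n \subseteq \gS_{n+1}$ and $N_n \le N_m$ for $n \le m$ (the latter noted just above this lemma).

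For the induction, the base case $h=0$ is immediate since $\gS_{0} = \gS_0 = \gR^{\varepsilon}_0(\gS_0)$. For the inductive step I would assume $\gS_{hN_{n^*}} \supseteq \gR^{\varepsilon}_h(\gS_0)$ together with $hN_{n^*} \le n^*$, and apply \cref{cor:expansion-of-safe-set} with $n = hN_{n^*}$ to get $\gS_{hN_{n^*} + N_{hN_{n^*}}} \supseteq \gR^{\varepsilon}(\gS_{hN_{n^*}})$. Since $hN_{n^*} \le n^*$ forces $N_{hN_{n^*}} \le N_{n^*}$, I can inflate the per-step budget to the uniform value $N_{n^*}$ using monotonicity of the safe set, and then feed the induction hypothesis through the monotone operator:
\[
\gS_{(h+1)N_{n^*}} \supseteq \gS_{hN_{n^*} + N_{hN_{n^*}}} \supseteq \gR^{\varepsilon}(\gS_{hN_{n^*}}) \supseteq \gR^{\varepsilon}\!\left(\gR^{\varepsilon}_h(\gS_0)\right) = \gR^{\varepsilon}_{h+1}(\gS_0),
\]
which closes the step provided the indices stay within $n^*$.

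It remains to check the episode count and conclude. The largest index appearing is $HN_{n^*}$, and the hypothesis $n^* \ge (H+1)N_{n^*} \ge HN_{n^*}$ guarantees $hN_{n^*} \le n^*$ for all $h \le H$, so every invocation above is legitimate. Taking $h = H$ gives $\gS_{HN_{n^*}} \supseteq \gR^{\varepsilon}_H(\gS_0)$, and since $HN_{n^*} \le n^* \le n$ for any $n \ge n^*$, monotonicity of the safe set yields $\gS_n \supseteq \gS_{HN_{n^*}} \supseteq \gR^{\varepsilon}_H(\gS_0)$, as claimed. The one genuinely delicate point, and the place I expect the argument to require care, is the $n$-dependence of $N_n$: chaining the single-step corollary naively would accumulate the growing budgets $N_{hN_{n^*}}$ and make the total hard to control. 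Replacing each step's budget by the uniform $N_{n^*}$ — justified exactly by the monotonicity of both $N_n$ and $\gS_n$ — is what lets the clean counting inequality $HN_{n^*} \le (H+1)N_{n^*} \le n^*$ suffice; everything else is bookkeeping on the single high-probability event.
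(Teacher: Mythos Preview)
Your proposal is correct and follows essentially the same approach as the paper: both arguments run an induction on the number of expansion rounds, invoke \cref{cor:expansion-of-safe-set} at each step, and use the monotonicity of $N_n$ together with $n^*\ge(H+1)N_{n^*}$ to replace the growing per-step budgets by the uniform value $N_{n^*}$. The only cosmetic difference is that the paper introduces an auxiliary sequence $T_k = T_{k-1}+N_{T_{k-1}}$ and bounds $T_k\le kN_{n^*}$, whereas you work directly with the indices $hN_{n^*}$; your explicit use of the monotonicity of $\gR^{\varepsilon}(\cdot)$ to push the induction hypothesis through is arguably cleaner than the paper's somewhat terse line $\gR^{\varepsilon}(\gS_{T_{k-1}})\coloneqq\gR^{\varepsilon}_k(\gS_0)$.
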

\begin{proof}
To prove this, we show for any positive integer $ k \leq H$, that 
$S_{kN_{n^*}} \supseteq \gR^{\varepsilon}_k(\gS_0)$ by induction. %For the base case $k=1$, we have that $N_0 \leq N_{n^*} \leq n^*$.
% Furthermore, we have from \TODO{refer to the lemma}
% $S_{N_0} \supseteq \gR^{\varepsilon}(\gS_0)$.
% Similarly, for $k=2$, we have we have from \TODO{refer to the lemma}
% $S_{N_0 + N_{N_0}} \supseteq \gR^{\varepsilon}(\gS_0)$.
% Note that since $N_0 \leq N_{n^*}$, $N_{N_0} \leq N_{N_{n^*}} \leq N_{n^*}$.
% Therefore, we have $S_{2 N_{n^*}} \supseteq \gR^{\varepsilon}_2(\gS_0)$.

Moreover for any $k$, let $T_{k} = T_{k-1} + N_{T_{k-1}}$ and $T_0 = 0$. We inductively show that $T_k \leq kN_{n^*}$ for all $k \leq H$. 

For the base case $k=1$, we have $T_1 = N_0 \leq N_{n^*}$ since $n^* \geq 0$.
Consider any $k \leq H$, then, we have $T_k = T_{k-1} + N_{T_{k-1}}$. By induction $T_{k-1} \leq (k-1) N_{n^*}$. Therefore, $T_k \leq (k-1) N_{n^*} + N_{(k-1) N_{n^*}}$. Furthermore, note that $(k-1) N_{n^*} \leq n^*$ for all $k \leq H$. Therefore, $T_k \leq (k-1) N_{n^*} + N_{n^*} = k N_{n^*}$.

Next, we have from \cref{cor:expansion-of-safe-set}, $\gS_{T_k} \supseteq \gR^{\varepsilon}(\gS_{T_{k-1}}) \coloneqq \gR^{\varepsilon}_k(\gS_0)$.
Moreover, $\gS_{T_1} \coloneqq \gS_{N_0} \supseteq \gR^{\varepsilon}(\gS_0)$. Similarly, $\gS_{T_2} \coloneqq \gS_{N_0 + N_{N_0}} \supseteq \gR^{\varepsilon}(\gS_1) \coloneqq \gR^{\varepsilon}_2(\gS_0)$, etc. Therefore, we get $\gS_{HN_{n^*}} \supseteq \gS_{T_{H}} \supseteq \gR^{\varepsilon}_{H}(\gS_0)$. As $n^* \geq HN_{n^*}$, this completes the proof.
\end{proof}

\begin{lemma}
    Let \cref{ass:lipschitz-continuity} -- \ref{ass:rkhs-func} hold and consider the smallest integer $n^{*}$ such that
\begin{equation}
            \frac{n^*}{\gamma_{n^*}(k)                \beta^4_{n^*}(\delta)} \geq \frac{(H + 1) T^{6} C^4\frac{d_s \sigma^2_0}{\log(1 + \sigma^{-2}\sigma^2_0)}}{\epsilon^2}.
        \end{equation}
        Then we have for all $n\geq n^*$ 
    \begin{equation*}
        S_{n} \supseteq \gR^{\varepsilon}_H(\gS_0).
    \end{equation*} 
    Moreover, we have 
    for all $n \geq n^*$, $\vpi \in \gR^{\varepsilon}_H(\gS_0)$ that 
    $|J_c(\vpi, \vf) - J_c(\vpi, \vf^*)| \leq \epsilon$ and  $|J_r(\vpi, \vf) - J_r(\vpi, \vf^*)| \leq \epsilon$.
    \label{lemma: convergence to the reachable safe set}
    \end{lemma}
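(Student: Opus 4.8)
The plan is to obtain this statement as a corollary of the two preceding results: the set-inclusion claim will follow from \cref{lemma:convergence-to-the-reachable-safe-set}, and the value-function bounds from \cref{lemma: bound on uncertainty in S_n}. The only genuinely new work is to show that the explicit, closed-form condition on $n^*$ displayed in the statement implies the implicit hypothesis $n^* \ge (H+1)N_{n^*}$ that \cref{lemma:convergence-to-the-reachable-safe-set} requires; everything else is bookkeeping. I would run the entire argument inside the single probability-$(1-\delta)$ event $\{\vf^*\in\bigcap_{n}\gQ_n\}$ on which \cref{lem:rkhs-confidence-interval}, and hence all the earlier lemmas, hold, so that no further union bound is incurred across the quantifier ``for all $n\ge n^*$''.

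The crux is the reduction. Writing $A \defeq T^{6}C^4 \frac{d_s\sigma_0^2}{\log(1+\sigma^{-2}\sigma_0^2)}/\epsilon^2$, the displayed condition reads $n^* \ge (H+1)\,A\,\gamma_{n^*}(k)\beta_{n^*}^4(\delta)$, whereas $N_{n^*}$ is, by its characterization, the smallest integer with $N_{n^*} \ge A\,\beta_{n^*+N_{n^*}-1}^4(\delta)\,\gamma_{n^*+N_{n^*}-1}(k)$, and the target is $N_{n^*}\le n^*/(H+1)$. I would argue by minimality of $N_{n^*}$: it suffices to exhibit an integer $N \le n^*/(H+1)$ that already satisfies the defining inequality, i.e. $N \ge A\,\beta^4_{n^*+N-1}(\delta)\,\gamma_{n^*+N-1}(k)$, since then $N_{n^*}\le N$. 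The delicate point, and the main obstacle, is that the index $n^*+N-1$ on the right exceeds $n^*$, so the monotonicity of $n\mapsto\gamma_n(k)$ and $n\mapsto\beta_n(\delta)$ pushes the right-hand side \emph{up} rather than down; one must therefore control the gap between $\gamma,\beta$ evaluated at $n^*$ and at the shifted index. I would handle this by first establishing a crude bound such as $N_{n^*}\le n^*$ (so that for $N\le n^*/(H+1)$ the shifted index lies in $[n^*,2n^*]$) and then invoking the sublinear growth of $\gamma_n(k)$ together with the polylogarithmic growth of $\beta_n(\delta)$, so that the bounded ratio $\gamma_{2n^*}\beta_{2n^*}^4/(\gamma_{n^*}\beta_{n^*}^4)$ is absorbed by the slack encoded in the factor $H+1$ (rather than merely $H$).

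Once $n^*\ge (H+1)N_{n^*}$ is established, \cref{lemma:convergence-to-the-reachable-safe-set} directly gives $\gS_n \supseteq \gR^{\varepsilon}_H(\gS_0)$ for every $n\ge n^*$, which is the first conclusion. For the second, I would apply \cref{lemma: bound on uncertainty in S_n} at each index $n\ge n^*$: it yields $|J_c(\vpi,\vf)-J_c(\vpi,\vf^*)|\le\epsilon$ and $|J_r(\vpi,\vf)-J_r(\vpi,\vf^*)|\le\epsilon$ for all $\vpi\in\gS_n$ and all $\vf\in\gM_{n+N_n-1}$. Restricting the policy to $\vpi\in\gR^{\varepsilon}_H(\gS_0)\subseteq\gS_n$, via the inclusion just proved, specializes these bounds to the stated set and completes the proof. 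The monotonicity of the model sets $\gM_n=\gM_{n-1}\cap\gQ_n$ and of $n\mapsto N_n$ ensures these bounds remain valid, and mutually consistent, across all $n\ge n^*$ on the same event.
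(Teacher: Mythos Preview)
Your plan coincides with the paper's: reduce the explicit condition on $n^*$ to the implicit hypothesis $n^*\ge (H+1)N_{n^*}$ of \cref{lemma:convergence-to-the-reachable-safe-set}, then invoke that lemma for the inclusion and \cref{lemma: bound on uncertainty in S_n} (together with the inclusion $\gR^{\varepsilon}_H(\gS_0)\subseteq\gS_n$) for the value bounds. Where you diverge is that you take the index–shift issue seriously. The paper's proof, after substituting $N_{n^*}=n^*/(H+1)$, asserts
\[
\frac{n^*/(H+1)}{\gamma_{\,n^*+n^*/(H+1)-1}(k)\,\beta^4_{\,n^*+n^*/(H+1)-1}(\delta)}
\ \ge\
\frac{n^*/(H+1)}{\gamma_{n^*}(k)\,\beta^4_{n^*}(\delta)},
\]
which is precisely the inequality that monotonicity of $\gamma$ and $\beta$ gives in the \emph{opposite} direction; this is exactly the obstacle you identify. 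So on the reduction step you are more careful than the paper itself.

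That said, your proposed repair does not close the gap either. The ``slack'' you invoke between $H+1$ and $H$ is the multiplicative factor $(H+1)/H=1+1/H$, which tends to $1$ and therefore cannot absorb a ratio $\gamma_{2n^*}\beta^4_{2n^*}/(\gamma_{n^*}\beta^4_{n^*})$ that is, for typical kernels, bounded away from $1$ uniformly in $n^*$. With the constants exactly as displayed, neither the paper's inequality nor your absorption argument yields $n^*\ge (H+1)N_{n^*}$. A clean fix is to state the sample-complexity condition with a pre-shifted index (e.g.\ $\gamma_{2n^*}\beta^4_{2n^*}$ in place of $\gamma_{n^*}\beta^4_{n^*}$) or to carry an explicit kernel-dependent doubling constant; either change leaves the asymptotic rate untouched and makes your minimality-of-$N_{n^*}$ argument go through verbatim.
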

    \begin{proof}
    Note that for any $n$, $N_n$ is defined as the smallest integer satisfying:
        \begin{align*}
 \frac{N_n}{\gamma_{n+N_n-1}(k)                \beta^4_{n + N_n-1}(\delta)} \geq \frac{T^{6} C^4\frac{d_s \sigma^2_0}{\log(1 + \sigma^{-2}\sigma^2_0)}}{\epsilon^2}.
        \end{align*}
        From \cref{lemma:convergence-to-the-reachable-safe-set}, for $n^* = (H + 1) N_{n^*}$, we have for all $n\geq n^*$ 
    \begin{equation*}
        S_{n} \supseteq \gR^{\varepsilon}_H(\gS_0).
    \end{equation*}
        We show that the solution to \cref{eq: bound on sample complexity} satisfies this condition. Moreover, let $n^*  = (H+1)N_{n^*}$
        \begin{align*}
             \frac{N_{n^*}}{\gamma_{n^*+N_{n^*}-1}(k)                \beta^4_{n^* + N_{n^*}-1}(\delta)} &=  \frac{\frac{n^*}{H+1}}{\gamma_{n^*+\frac{n^*}{H+1}-1}(k)                \beta^4_{n^* + \frac{n^*}{H+1}-1}(\delta)} \\
             &\geq \frac{\frac{n^*}{H+1}}{\gamma_{n^*}(k)                \beta^4_{n^*}(\delta)}
        \end{align*}
        Picking $n^*$ as the smallest integer satisfying 
        \begin{equation*}
            \frac{n^*}{\gamma_{n^*}(k)                \beta^4_{n^*}(\delta)} \geq \frac{(H+1) T^{6} C^4\frac{d_s \sigma^2_0}{\log(1 + \sigma^{-2}\sigma^2_0)}}{\epsilon^2},
        \end{equation*}
        ensures that 
        \begin{align*}
 \frac{N_{n^*}}{\gamma_{{n^*}+N_{n^*}-1}(k)                \beta^4_{{n^*}+ N_{n^*}-1}(\delta)} \geq \frac{T^{6} C^4\frac{d_s \sigma^2_0}{\log(1 + \sigma^{-2}\sigma^2_0)}}{\epsilon^2}
        \end{align*}

Finally, from \cref{lemma: bound on uncertainty in S_n} we have that $S_{H N_{n^*}} \supseteq \gR^{\varepsilon}_H(\gS_0)$.

Therefore, for all $n \geq n^*$, 
$\vpi \in \gR^{\varepsilon}_H(\gS_0)$, $\vf \in \gM_{n^*+N_{n^*}-1}$ with probability at least $1-\delta$
   \begin{equation*}
        |J_c(\vpi, \vf) - J_c(\vpi, \vf^*)| \leq \epsilon, \; \text{and,} \; |J_r(\vpi, \vf) - J_r(\vpi, \vf^*)| \leq \epsilon.
   \end{equation*}
    \end{proof}

    \begin{proof}[Proof of \cref{thm:main-theorem}]
We prove in \cref{corollary:all-policies-are-safe}, that all policies in $\gS_n$ are safe for all $n\geq 0$.
\algo{} is safe since it picks policies only from $\gS_n$.

For optimality, we showed in \cref{lemma:convergence-to-the-reachable-safe-set} for all $n \geq n^*$ that $ S_{n} \supseteq \gR^{\varepsilon}_H(\gS_0)$.
Moreover, we have $\forall \vpi \in \gR^{\varepsilon}_H(\gS_0)$, $\vf \in  \gM_{n^*+N_{n^*}-1} \supseteq \gM_{n}$, 
$|J_r(\vpi, \vf) - J_r(\vpi, \vf^*)| \leq \epsilon$.
Let $\vpi^*$ be the optimal policy and let $\tilde{\vpi}_n$ denote the solution to $\arg\max_{\vpi \in \gS_n} \min_{\vf \in \gM_n} J_r(\vpi, \vf)$. 
For the sake of contradiction, assume that 
\begin{equation}
\label{eq:contradictory_ass}
 J_r(\tilde{\vpi}_n)  < \max_{\vpi \in \gR^{\epsilon}_H(\gS_0)} J_r(\vpi, \vf^*) - \epsilon.
\end{equation}
Furthermore, let $P^r_n(\vpi) = \min_{\vf \in \gM_n} J_r(\vpi, \vf)$ for all $\vpi \in \Pi$.
\begin{align*}
 P^r_n(\vpi^*) &\leq \max_{\vpi \in \gS_n} P^r_n(\vpi) \\
 &= P^r_n(\tilde{\vpi}_n) \\
&\leq J_r(\tilde{\vpi}_n) \\
    &<  J_r(\vpi^*, \vf^*) - \epsilon \tag{contradiction assumption}\\
    %&\leq \min_{\vf^* \in \gM_n} J_r(\vpi^*, \vf)   \\
    &\leq  P^r_n(\vpi^*) \tag{\cref{lemma: convergence to the reachable safe set}}.
\end{align*}
This is a contradiction, which completes the proof.

\end{proof}
\section{Experiment Details}
\label{sec:experiment-details}
\subsection{GP Experiments}
For the GP experiments, we approximate \cref{eq:exploration-op-optimistic-practical} with the following unconstrained optimization problem.
\begin{align}
 \argmax_{\vpi \in \Pi} \max_{\vf \in \gQ_n} J_n(\vpi, \vf) - \lambda \max\left\{\max_{\vf' \in \gQ_n} J_c(\vpi, \vf') - d, 0\right\}.
\end{align}
Here $\lambda$ is a (large) penalty term that is used to discourage constraint violation. 
We use the iCEM~\citep{iCem} optimizer to solve the constrained optimization above.
Effectively, given a sequence of actions $\{\va_t\}^{H}_{t=0}$, we roll them out on our learned GP model using the TS1 approach from \citet{chua2018deep}. 
Moreover, we maintain $P$ particles, and given the state $(\vs^{p}_t, \va^{p}_t)$ for the $p$-th particle, we determine the next state $\vs^{p}_{t+1}$, by sampling from $\gN(\vmu_n(\vs^{p}_t, \va^{p}_t), \vsigma_n(\vs^{p}_t, \va^{p}_t))$. Accordingly, for each action sequence $\{\va_t\}^{H}_{t=0}$, we obtain $P$ trajectories and we empirically solve $\max_{\vf' \in \gQ_n} J_c(\vpi, \vf')$ by taking the max over the $P$ trajectories. This approach is also proposed by \citet{kakade2020information} as a heuristic for optimizing over the dynamics. 

\paragraph{Rewards and constraints}
The reward function is designed to penalize deviations in both the angular position and the control input from the target behavior. For both the \textsc{Pendulum} and \textsc{Cartpole}, the state of the pole can be defined as follows. Let $\theta$ be the current angle, $\omega$ the angular velocity, and $u$ the control input. The target angle is denoted as $\theta_{\text{target}}$, and the angular error between the current angle and the target angle is $\Delta \theta$. The reward and cost functions for the \textsc{Pendulum} environment are given by
\begin{equation*}
    r_{\text{Pendulum}} = -\left(\Delta \theta^2 + 0.1 \cdot \omega^2 + 0.02 \cdot u^2 \right), \quad c_{\text{Pendulum}} = \max\{\abs{\omega}- 6.0, 0.0\}, d = 0.0.
\end{equation*}
For the \textsc{Cartpole} environment, the position and velocity of the slider are defined as $p$ and $v$ respectively.
The reward for the \textsc{Cartpole} environment is the given by
\begin{equation*}
    r_{\text{Cartpole}} = -\left(\Delta \theta^2 + p^2 + 0.1 \cdot \left( v^2 + \omega^2 \right) \right) - 0.01 \cdot u^2, \quad c_{\text{Cartpole}} = \max\{\abs{p}- 0.5, 0.0\}, d = 0.75
    .
\end{equation*}
\subsection{Vision Control Experiments}
We provide an open-source implementation of our experiments in \url{https://github.com/yardenas/actsafe}. We encourage readers to use it, as it contains additional important implementation details. In all the experiments below, our policy consists of 750K parameters, a several orders of magnitudes compared to previous works on provable safe explorations.

\paragraph{Approximating \Cref{eq:exploration-op-optimistic-practical}}
We solve the constraint optimization problem in~\Cref{eq:exploration-op-optimistic-practical} using the LBSGD solver from~\citet{usmanova2024log}.
LBSGD is a first-order optimizer that
uses a logarithmic barrier function to enforce constraint satisfaction. Previous works from~\citet{ni2024safeexplorationapproachconstrained, as2024safeexplorationusingbayesian} have successfully applied LBSGD for planning in model-based RL with CMDPs, showing notably fewer constraint violations than alternative solvers like the augmented Lagrangian method~\citep{as2022constrained}.

To approximate \Cref{eq:exploration-op-optimistic-practical}, we maintain an RSSM ensemble of $P$ particles and given the state action pair $(\vs_t, \pi_n(\va_t | \vs_t))$, we obtain $P$ estimates $\{\vs^p_{t + 1}\}_{p = 1}^P$ for the next state. We estimate $\vsigma^2_n$ with the variance/disagreement between the ensemble members, i.e.,  $\text{Var}\left(\{\vs^p_{t + 1}\}_{p = 1}^P\right)$. We obtain the next state $\vs_{t+1}$ by uniform sampling from $\{\vs^p_{t + 1}\}_{p = 1}^P$, i.e., TS1 from \citet{chua2018deep}. 
Akin to \citet{yu2020mopo}, we approximate $\max_{\vf' \in \gQ_n} J_c(\vpi, \vf')$ by penalizing the cost function with $\vsigma_n$
\begin{equation*}
    J_{c -\lambda \vsigma}(\vpi_n) = \E_{\vpi_n}\left[\sum^{H}_{t=0}\gamma^t(c(\vs_t, \va_t) + \lambda \norm{\vsigma_{n}(\vs_t, \va_t)})\right],
\end{equation*}
where $\lambda$ is a pessimism parameter.  \citet{yu2020mopo} show that for an appropriate choice of $\lambda$, $J_{c -\lambda \vsigma}(\vpi_n)$ is indeed a pessimistic estimate of $J_{c}(\vpi_n)$. However, in our experiments we treat $\lambda$ as a hyper-parameter.

\paragraph{Safety experiments}
We focus on \sgym{} to showcase our practical algorithm design maintains constraint satisfaction during learning. Our experiments rely on a newer fork of \sgym{} which is available via our open-source code. We follow the experimental setup of \citet{Ray2019,as2022constrained} and an episode length of $T = 1000$. We set the cost budget for each episode to $d = 25$ for \sgym{} \citep[see][]{Ray2019}. After each training epoch we estimate $J_r(\vpi_n)$ and $J_c(\vpi_n)$ by sampling 50 episodes, denoting the estimates with $\hat{J}_r$ and $\hat{J}_c$. Unless specified otherwise, in all our experiments we use 5 random seeds and report the median and standard error across these seeds. Finally, we use a budget of 5M training steps for each training run. To make a fair comparison with \citet{as2022constrained,huang2024safedreamer}, we fix the ratio of environment steps and update steps of the model and policy. While \citet{huang2024safedreamer} use the RSSM model from \citet{hafner2023mastering}, our implementation uses the (older) one from \citet{hafner2019planet} and \citet{as2022constrained}.

\paragraph{Sparse \sgym{}}
\newcommand{\distance}[2]{d_{#1}^{\text{#2}}}
Let $\distance{t}{RG}$ be the euclidean distance between the robot and the goal/button at time step $t$, $\distance{t}{BG}$ the distance between the box and the goal position and $\distance{t}{RB}$ the distance between the robot and the box positions. Furthermore, denote $\text{tol}(x, l, u)$ as the tolerance function from \citet{tassa2018deepmind}, where $l, u$ denotes lower and upper bounds respectively.
\begin{table}[h!]
\centering
\begin{tabularx}{\textwidth}{lXX} 
\toprule
\textbf{Environment} & \textbf{Dense Reward} & \textbf{Sparse Reward} \\
\midrule
\textsc{GotoGoal} & $\distance{t - 1}{RG} - \distance{t}{RG} + \1_{\distance{t}{BG} \leq 0.3}$ & $\text{tol}(\distance{t}{RG}, 0, 0.45)\cdot(\distance{t - 1}{RG} - \distance{t}{RG}) + \1_{\distance{t}{RG} \leq 0.3}$ \\ \midrule
\textsc{PressButton} & $\distance{t - 1}{RG} - \distance{t}{RG} + \1_{\distance{t}{BG} \leq 0.3}$ & $\1_{\distance{t}{RG} \leq 0.1}$  \\ \midrule
\textsc{PushBox} & $\distance{t - 1}{RB} - \distance{t}{RB} + \distance{t - 1}{BG} - \distance{t}{BG} + \1_{\distance{t}{BG} \leq 0.3}$&  $\text{tol}(\distance{t}{RB}, 0, 0.5)\cdot(\distance{t - 1}{RB} - \distance{t}{RB}) + \distance{t - 1}{BG} - \distance{t}{BG} + \1_{\distance{t}{BG} \leq 0.3}$ \\
\bottomrule
\end{tabularx}
\caption{Comparison of the reward functions in the base environments of \sgym{} and our sparse rewards environments.}
\vspace{-0.5cm}
\label{tab:dense-sparse-rewards}
\end{table}

\paragraph{Cartpole exploration}
\begin{figure}[ht]
\centering
\resizebox{\textwidth}{!}{%
    \begin{tikzpicture}
      \node[anchor=south west,inner sep=0] (image) at (0,0) {\includegraphics[width=\textwidth]{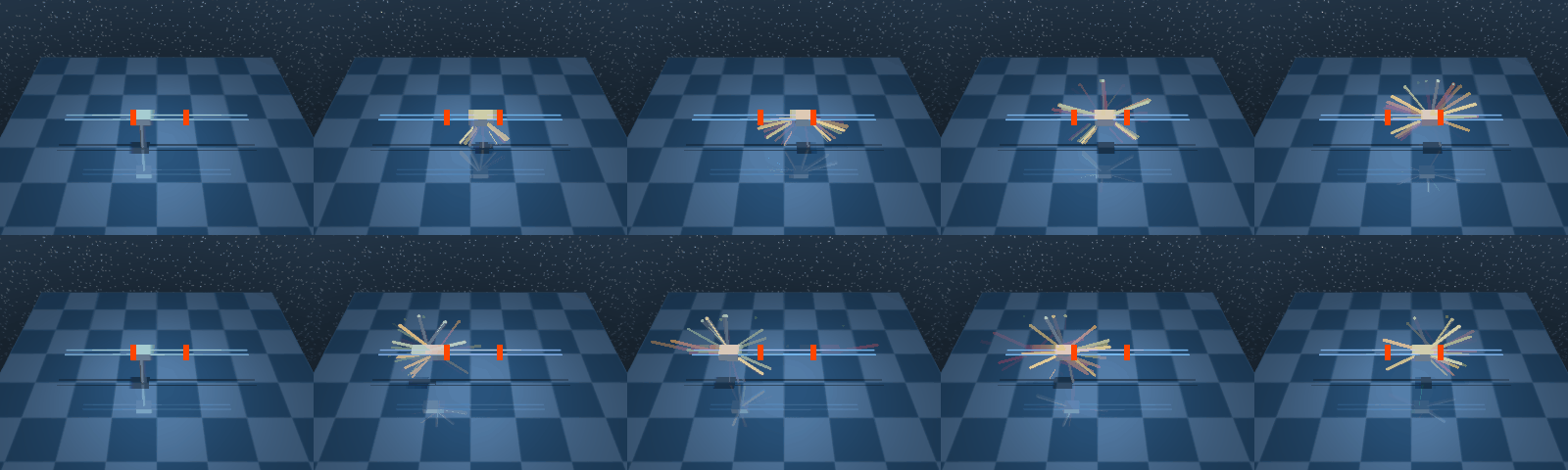}};
      \begin{scope}[x={(image.south east)},y={(image.north west)}]
        \node[anchor=east] at (-0.01,0.75) {Safe};
        \node[anchor=east] at (-0.01,0.25) {Unsafe};
        \draw[->,thick] (0,-0.1) -- (1,-0.1) node[midway,below] {Training Progress};
      \end{scope}
    \end{tikzpicture}
}
\caption{\textsc{Cartpole} environment as an example of a problem instance of safe exploration. Each scene summarizes a trajectory that was collected in increasing training iterations. The agent incurs a cost whenever the cart goes outside of the area between the two red vertical lines. The goal is to learn a policy that swings the pole to the top position, while ensuring the expected accumulated cost is bounded \emph{during learning}. Learning in this setting is much more challenging, as agents can only try out control policies that known to be safe.}
\label{fig:cartpole-demo}
\end{figure}
In this task, the agent receives a sparse reward when it swings up a pendulum to the top position and when the slider (a.k.a cart) is centered. The \rwrl{} benchmark \citep{dulacarnold2019challengesrealworldreinforcementlearning} adds a safety constraint that enforces the slider to remain in a certain distance from the center (see \Cref{fig:cartpole-demo}). As in \citet{dulacarnold2019challengesrealworldreinforcementlearning}, we use a cost budget of $d = 100$ and an episode length of $T = 1000$ steps. Adding the safety constraint adds a significant challenge, as any safe policy is much more limited in exploration. In addition to the safety constraint, we add a cost for taking actions, as done in \citet{curi2020efficient}. Combining all these factors together, makes a challenging exploration task, as we show in our experiments. Further implementation details can be found in our open-source code.

\newpage
\section{Additional Experiments}
\label{sec:additional-experiments}
\paragraph{Experiments with the \textsc{Doggo} Robot}
In this experiment we compare \algo{} with the same baseline algorithms from \Cref{sec:experiments} on \sgym{}'s \textsc{Doggo} robot. We omit our results in the \textsc{PushBox} environment as all baselines failed to solve it.
\begin{figure}
    \centering
\includegraphics[clip,width=0.88\textwidth]{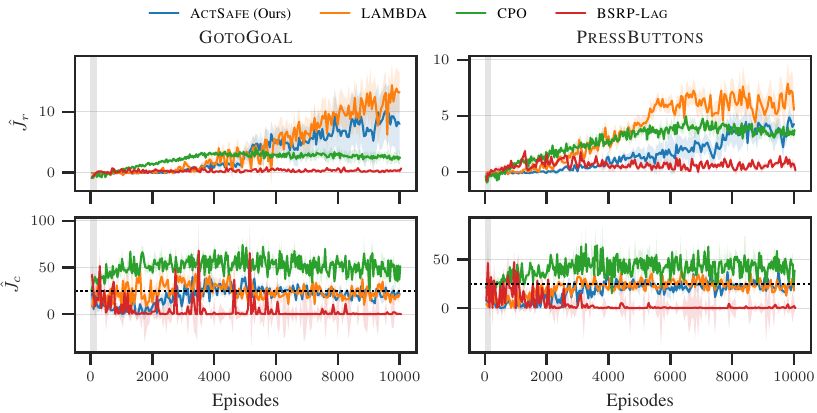}
    \caption{Performance and safety in with the \textsc{Doggo} robot.}
    \label{fig:doggo}
\end{figure}
As shown in \Cref{fig:doggo}, similarly to the results in \Cref{sec:experiments}, \algo{} maintains safety during learning, while moderately underperforming \textsc{LAMBDA}. Overall, \algo{} outperforms \textsc{CPO} both in terms of safety and performance and \textsc{BSRP-Lag} of \citet{huang2024safedreamer} in terms of performance.
\paragraph{Ablating LBSGD}
One assumptions of LBSGD that we cannot formally satisfy relates to unbiasedness of the evaluation of the objective, constraints and their gradients. In principle, satisfying this assumption will allow us to guarantee that all iterates of \Cref{eq:exploration-op-optimistic-practical} are feasible, i.e., satisfy the pessimistic constraint. 
This is in contrast to primal-dual methods, such as the Augmented Lagrangian of \citet{as2022constrained} that lacks any guarantees on feasibility during optimization. While it is hard to formally satisfy LBSGD's unbiasedness assumption, we empirically observe that LBSGD allows us to keep constraint satisfaction during learning. We present this result in \Cref{fig:lbsgd-ablate}.
\begin{figure}
    \centering
\includegraphics[clip,width=0.88\textwidth]{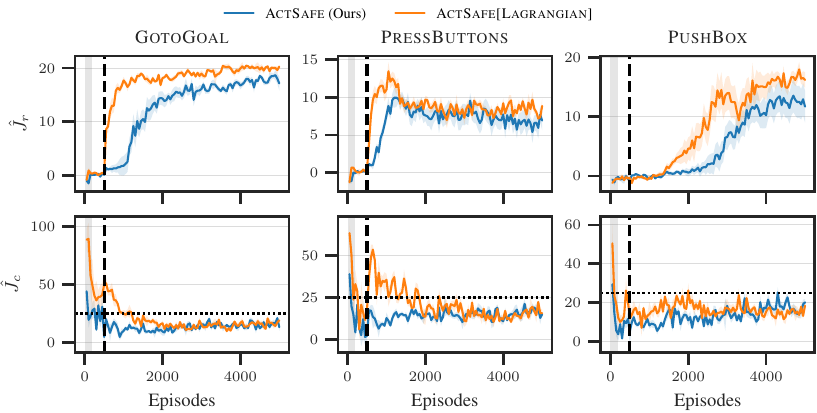}
    \caption{Augmented Lagrangian from \citet{as2022constrained} compared to LBSGD of \citet{usmanova2024log}. LBSGD significantly reduces the number of unsafe episodes.}
    \vspace{-0.5cm}
    \label{fig:lbsgd-ablate}
\end{figure}
As shown, even after initializing both variants with initial data from the burn-in period, \textsc{ActSafe[Lagrangian]} fails to satisfy the constraints throughout learning. As in the main results on safety in \Cref{fig:safety}, compared to Augmented Lagrangian, LBSGD maintains safety during learning at a slight price of performance.
\paragraph{Safe Adaptation}
Here, instead of the warm-up period of data collection, we study the effect of first training on a ``safe'' environment, like a simulator, and then continuing training on a similar environment, but with shifted dynamics. To this end, we extend \textsc{GotoGoal} from \sgym{} to two additional tasks, in which we change the motor gear and floor damping coefficients. The agent is first allowed to explore the ``sim'' environment for 300K interaction steps before being deployed on the ``real'' environment. We analyze the impact of our LBSGD optimizer and of pessimism in handling constraint violation during deployment.
As shown in \Cref{fig:adaptation}, without LBSGD and pessimism, \algo{} does not always transfer safely to the deployment environment. Furthermore, intuitively, while pessimism is crucial for maintaining safety while adapting to distribution shifts, it may sometimes hinder performance of the main objective. This experiment demonstrates that, if one has no initial data, one can use \algo{} in combination with a simulator to achieve safe exploration in practice, with a clear tradeoff of the simulator's fidelity and the degree of pessimism in \algo{}.
\begin{figure}[ht]
    \centering
\includegraphics[width=0.88\textwidth]{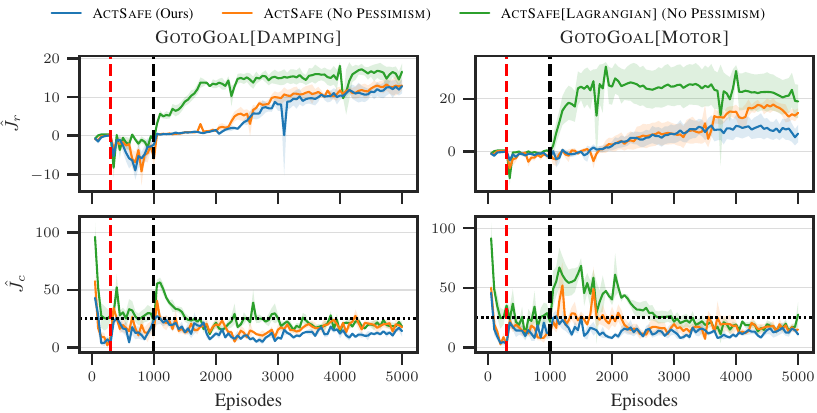}
    \caption{Adaptation to domain shifts. The red dashed vertical line represents the step after which we switch dynamics. Black dashed vertical line represents changing from active exploration to greedily maximizing the reward. We report the mean metrics across 5 seeds.}
    \vspace{-0.5cm}
    \label{fig:adaptation}
\end{figure}
\paragraph{Humanoid Proof-of-Concept}
We further demonstrate the scalability of \algo{} on the \textsc{HumanoidBench} benchmark \citep{sferrazza2024humanoidbench}. 
\begin{figure}
    \centering
    \begin{subfigure}[t]{.48\textwidth}
        \centering
        \includegraphics[width=0.7\linewidth]{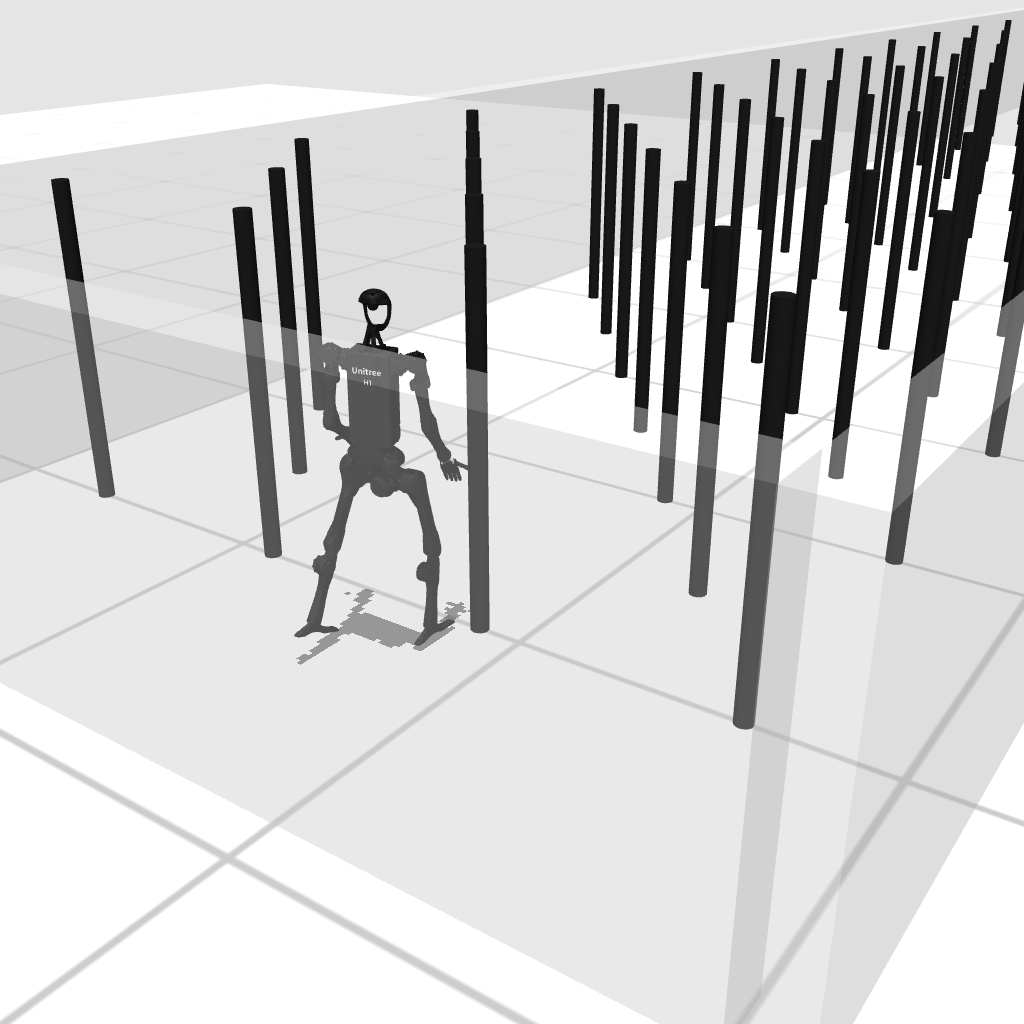}
        \caption{\textsc{Pole} task of \textsc{HumanoidBench}. The robot has to cross to the other side of the maze while avoiding hitting the poles.}
    \end{subfigure}%
    \hfill
    \begin{subfigure}[t]{.48\textwidth}
        \centering
        \includegraphics[width=\linewidth]{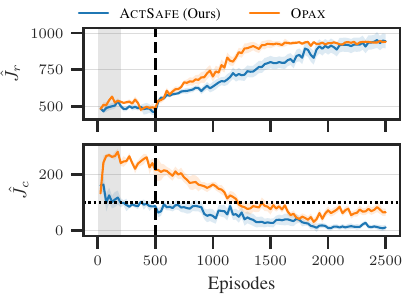}
        \caption{Performance and safety on the \textsc{Pole} task of \textsc{HumanoidBench}.}
    \end{subfigure}
    \vspace{-0.25cm}
    \caption{Overview of the Pole task and its performance metrics.}
    \label{fig:humanoid}
\end{figure}
We use a robust, low-level walking policy provided with the benchmark, and input visual observations from a third-person camera view. We compare \algo{} with \textsc{Opax} \citep{sukhija2024optimistic} on the \textsc{Pole} task, where a humanoid robot must navigate through a field of pole obstacles, as illustrated in \Cref{fig:humanoid}. In this task, the agent incurs a cost of 1 for each pole it hits and when it falls, while the reward is based on the robot's forward velocity. As shown in \Cref{fig:humanoid}, \algo{} significantly reduces the number of constraint violations compared to \textsc{Opax}, while maintaining competitive performance on the objective.

\paragraph{Comparison with \textsc{Opax} on \textsc{Cartpole}}
\begin{wrapfigure}[16]{r}
{0.5\textwidth}
\vspace{-1.25\baselineskip}
  \begin{center}
        \includegraphics{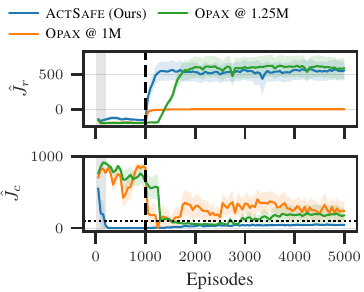}
  \end{center}
  \caption{Comparison of \algo{} and \textsc{Opax} in the \textsc{CartpoleSwingupSparse} task of \rwrl{}.}
  \label{fig:cartpole-3e-opax}
\end{wrapfigure}
We compare \algo{} with \textsc{Opax} \citep{sukhija2022scalable} on the \textsc{CartpoleSwingupSparse} task from \Cref{sec:vision-control-exp}. Both \algo{} and \textsc{Opax} rely on intrinsic rewards for exploration and model learning, however, \algo{} only considers policies from within the pessimistic safe set. We compare \algo{} with \textsc{Opax} trained for 1M and 1.25M steps of pure exploration. \algo{} uses 1M exploration steps, as in \Cref{sec:vision-control-exp}. As shown in \Cref{fig:cartpole-3e-opax}, \textsc{Opax} fails to sufficiently explore the dynamics within 1M steps. The reason being that \algo{} can explore in a much more confined state-action space, and therefore visits states with non-zero rewards quicker. This is in contrast to \textsc{Opax} which is permitted to explore unsafe action-states as well, and therefore less likely to visit these states within the given training budget. We note that a result in a similar spirit has been observed by \citet[][Figure 2]{widmer2023tuning}. While 1M steps are not enough for \textsc{Opax} to fully learn the dynamics when no constraints are imposed on the policy, in \Cref{fig:cartpole-3e-opax} we show that after having explored the dynamics for 1.25M steps, \textsc{Opax} is able to recover an optimal policy. Unsurprisingly, in both experiments \textsc{Opax} fails to satisfy the constraints, as it optimizes only for the intrinsic reward.

\end{document}